\documentclass[]{article}
\usepackage{amssymb,amsmath,amsthm,multirow,graphicx}
\usepackage[paper=a4paper,textwidth=160mm]{geometry}

\usepackage[]{hyperref}
\hypersetup{pdftitle={DEGMC: Denoising Diffusion Models Based on Riemannian Equivariant Group Morphological Convolutions},
colorlinks=true,citecolor=blue,linkcolor=red}

\usepackage{algorithmic,algorithm}

\usepackage{dsfont}

\def\Z{\ensuremath{\mathds{Z}}}
\def\R{\ensuremath{\mathds{R}}}
\def\E{\ensuremath{\mathds{E}}}

\def\B{\ensuremath{\mathds{B}}}

\newcommand{\ud}{\mathrm{d}}

\newcommand{\gM}{\mathcal{M}}
\newcommand{\gN}{\mathcal{N}}

\newcommand{\ie}{\textit{i.e.},}

\newcommand{\resp}{\textit{resp.}}

\newtheorem{theorem}{Theorem}[section]

\newtheorem{definition}{Definition}[section]

\newtheorem{proposition}[theorem]{Proposition}

\usepackage{caption}
\usepackage{subcaption}
\usepackage{tikz}
\usepackage{pgfplots}
\usetikzlibrary{spy,calc}

\usepackage{multirow}
\usepackage{ctable}

\newif\ifblackandwhitecycle
\gdef\patternnumber{0}

\pgfkeys{/tikz/.cd,
    zoombox paths/.style={
        draw=orange,
        very thick
    },
    black and white/.is choice,
    black and white/.default=static,
    black and white/static/.style={ 
        draw=white,   
        zoombox paths/.append style={
            draw=white,
            postaction={
                draw=black,
                loosely dashed
            }
        }
    },
    black and white/static/.code={
        \gdef\patternnumber{1}
    },
    black and white/cycle/.code={
        \blackandwhitecycletrue
        \gdef\patternnumber{1}
    },
    black and white pattern/.is choice,
    black and white pattern/0/.style={},
    black and white pattern/1/.style={    
            draw=white,
            postaction={
                draw=black,
                dash pattern=on 2pt off 2pt
            }
    },
    black and white pattern/2/.style={    
            draw=white,
            postaction={
                draw=black,
                dash pattern=on 4pt off 4pt
            }
    },
    black and white pattern/3/.style={    
            draw=white,
            postaction={
                draw=black,
                dash pattern=on 4pt off 4pt on 1pt off 4pt
            }
    },
    black and white pattern/4/.style={    
            draw=white,
            postaction={
                draw=black,
                dash pattern=on 4pt off 2pt on 2 pt off 2pt on 2 pt off 2pt
            }
    },
    zoomboxarray inner gap/.initial=5pt,
    zoomboxarray columns/.initial=2,
    zoomboxarray rows/.initial=2,
    subfigurename/.initial={},
    figurename/.initial={zoombox},
    zoomboxarray/.style={
        execute at begin picture={
            \begin{scope}[
                spy using outlines={%
                    zoombox paths,
                    width=\imagewidth / \pgfkeysvalueof{/tikz/zoomboxarray columns} - (\pgfkeysvalueof{/tikz/zoomboxarray columns} - 1) / \pgfkeysvalueof{/tikz/zoomboxarray columns} * \pgfkeysvalueof{/tikz/zoomboxarray inner gap} -\pgflinewidth,
                    height=\imageheight / \pgfkeysvalueof{/tikz/zoomboxarray rows} - (\pgfkeysvalueof{/tikz/zoomboxarray rows} - 1) / \pgfkeysvalueof{/tikz/zoomboxarray rows} * \pgfkeysvalueof{/tikz/zoomboxarray inner gap}-\pgflinewidth,
                    magnification=3,
                    every spy on node/.style={
                        zoombox paths
                    },
                    every spy in node/.style={
                        zoombox paths
                    }
                }
            ]
        },
        execute at end picture={
            \end{scope}
            \node at (image.north) [anchor=north,inner sep=0pt] {\subcaptionbox{\label{\pgfkeysvalueof{/tikz/figurename}-image}}{\phantomimage}};
            \node at (zoomboxes container.north) [anchor=north,inner sep=0pt] {\subcaptionbox{\label{\pgfkeysvalueof{/tikz/figurename}-zoom}}{\phantomimage}};
     \gdef\patternnumber{0}
        },
        spymargin/.initial=0.5em,
        zoomboxes xshift/.initial=1,
        zoomboxes right/.code=\pgfkeys{/tikz/zoomboxes xshift=1},
        zoomboxes left/.code=\pgfkeys{/tikz/zoomboxes xshift=-1},
        zoomboxes yshift/.initial=0,
        zoomboxes above/.code={
            \pgfkeys{/tikz/zoomboxes yshift=1},
            \pgfkeys{/tikz/zoomboxes xshift=0}
        },
        zoomboxes below/.code={
            \pgfkeys{/tikz/zoomboxes yshift=-1},
            \pgfkeys{/tikz/zoomboxes xshift=0}
        },
        caption margin/.initial=4ex,
    },
    adjust caption spacing/.code={},
    image container/.style={
        inner sep=0pt,
        at=(image.north),
        anchor=north,
        adjust caption spacing
    },
    zoomboxes container/.style={
        inner sep=0pt,
        at=(image.north),
        anchor=north,
        name=zoomboxes container,
        xshift=\pgfkeysvalueof{/tikz/zoomboxes xshift}*(\imagewidth+\pgfkeysvalueof{/tikz/spymargin}),
        yshift=\pgfkeysvalueof{/tikz/zoomboxes yshift}*(\imageheight+\pgfkeysvalueof{/tikz/spymargin}+\pgfkeysvalueof{/tikz/caption margin}),
        adjust caption spacing
    },
    calculate dimensions/.code={
        \pgfpointdiff{\pgfpointanchor{image}{south west} }{\pgfpointanchor{image}{north east} }
        \pgfgetlastxy{\imagewidth}{\imageheight}
        \global\let\imagewidth=\imagewidth
        \global\let\imageheight=\imageheight
        \gdef\columncount{1}
        \gdef\rowcount{1}
        
    },
    image node/.style={
        inner sep=0pt,
        name=image,
        anchor=south west,
        append after command={
            [calculate dimensions]
            node [image container,subfigurename=\pgfkeysvalueof{/tikz/figurename}-image] {\phantomimage}
            node [zoomboxes container,subfigurename=\pgfkeysvalueof{/tikz/figurename}-zoom] {\phantomimage}
        }
    },
    color code/.style={
        zoombox paths/.append style={draw=#1}
    },
    connect zoomboxes/.style={
    spy connection path={\draw[draw=none,zoombox paths] (tikzspyonnode) -- (tikzspyinnode);}
    },
    help grid code/.code={
        \begin{scope}[
                x={(image.south east)},
                y={(image.north west)},
                font=\footnotesize,
                help lines,
                overlay
            ]
            \foreach \x in {0,1,...,9} { 
                \draw(\x/10,0) -- (\x/10,1);
                \node [anchor=north] at (\x/10,0) {0.\x};
            }
            \foreach \y in {0,1,...,9} {
                \draw(0,\y/10) -- (1,\y/10);                        \node [anchor=east] at (0,\y/10) {0.\y};
            }
        \end{scope}    
    },
    help grid/.style={
        append after command={
            [help grid code]
        }
    },
}

\newcommand\phantomimage{%
    \phantom{%
        \rule{\imagewidth}{\imageheight}%
    }%
}
\newcommand\zoombox[2][]{
    \begin{scope}[zoombox paths]
        \pgfmathsetmacro\xpos{
            (\columncount-1)*(\imagewidth / \pgfkeysvalueof{/tikz/zoomboxarray columns} + \pgfkeysvalueof{/tikz/zoomboxarray inner gap} / \pgfkeysvalueof{/tikz/zoomboxarray columns} ) + \pgflinewidth
        }
        \pgfmathsetmacro\ypos{
            (\rowcount-1)*( \imageheight / \pgfkeysvalueof{/tikz/zoomboxarray rows} + \pgfkeysvalueof{/tikz/zoomboxarray inner gap} / \pgfkeysvalueof{/tikz/zoomboxarray rows} ) + 0.5*\pgflinewidth
        }
        \edef\dospy{\noexpand\spy [
            #1,
            zoombox paths/.append style={
                black and white pattern=\patternnumber
            },
            every spy on node/.append style={#1},
            x=\imagewidth,
            y=\imageheight
        ] on (#2) in node [anchor=north west] at ($(zoomboxes container.north west)+(\xpos pt,-\ypos pt)$);}
        \dospy
        \pgfmathtruncatemacro\pgfmathresult{ifthenelse(\columncount==\pgfkeysvalueof{/tikz/zoomboxarray columns},\rowcount+1,\rowcount)}
        \global\let\rowcount=\pgfmathresult
        \pgfmathtruncatemacro\pgfmathresult{ifthenelse(\columncount==\pgfkeysvalueof{/tikz/zoomboxarray columns},1,\columncount+1)}
        \global\let\columncount=\pgfmathresult
        \ifblackandwhitecycle
            \pgfmathtruncatemacro{\newpatternnumber}{\patternnumber+1}
            \global\edef\patternnumber{\newpatternnumber}
        \fi
    \end{scope}
}

\begin{document}

\title{DEGMC: Denoising Diffusion Models Based on Riemannian Equivariant Group Morphological Convolutions}
\author{
El Hadji S. Diop$^\star$, Thierno Fall$^\star$ and Mohamed Daoudi$^\ddag$\vspace{0.5cm}\\
{\small $^\star$NAGIP-Nonlinear Analysis and Geometric Information Processing Research Group}\\ {\small Department of Mathematics, University Iba Der Thiam, BP $967$, Thies, Senegal} \vspace{0.5cm}\\ 
 {\small $^\ddag$Institut Mines-Telecom Nord Europe, Centre for Digital Systems, CNRS Centrale Lille} \\{\small UMR 9189 CRIStAL, University of Lille, F-$59000$ Lille}
}	
\date{}

\maketitle

%%%%--------------------------------------------------------------------------------
\begin{abstract}
%Diffusion models have recently emerged and have shown great capabilities in high-quality image synthesis and data generation. 
In this work, we address two major issues in recent Denoising Diffusion Probabilistic Models (DDPM): {\bf 1)} geometric key feature extraction and {\bf 2)} network equivariance. Since the DDPM prediction network relies on the U-net architecture, which is theoretically only translation equivariant, we introduce a geometric approach combined with an equivariance property of the more general Euclidean group, which includes rotations, reflections, and permutations. We introduce the notion of group morphological convolutions in Riemannian manifolds, which are derived from the viscosity solutions of first-order Hamilton-Jacobi-type partial differential equations (PDEs) that act as morphological multiscale dilations and erosions. We add a convection term to the model and solve it using the method of characteristics. This helps us better capture nonlinearities, represent thin geometric structures, and incorporate symmetries into the learning process. Experimental results on the MNIST, RotoMNIST, and CIFAR-10 datasets show noticeable improvements compared to the baseline DDPM model.\\ 
 
 \noindent{\bf Key words:} {\it Diffusion models. Hamilton-Jacobi equations. Group morphological convolutions. Equivariance. Riemannian manifolds.}

\end{abstract}

%%%%%%%%%%%%%%%%%%%%%%%%%%%%%%%%%%%%%%%%%%%%%%%%%
\section{Introduction} %%%%%%%%%%%%%%%%%%%%%%%%%%
%%%%%%%%%%%%%%%%%%%%%%%%%%%%%%%%%%%%%%%%%%%%%%%%

Over the past few years, deep generative models have experienced rapid growth, with applications ranging from realistic image generation \cite{Goodfellow2014,Kingma2013,Kingma2014,dhariwal2021diffusion,ho2020denoising} to audio synthesis \cite{chen2020wavegrad,popov2021grad}, and even molecular modeling \cite{simonovsky2018graphvae,gebauer2019symmetry,simm2020symmetry,hoogeboom2022equivariant}. Among these approaches, probabilistic diffusion models (PDM) \cite{SohlDickstein2015,Song2019,ho2020denoising,Song2021,Croitoru2023} have emerged as particularly influential due to their impressive generative capabilities. PDM can be broadly categorized into three main classes: denoising diffusion probabilistic models (DDPM) \cite{SohlDickstein2015,ho2020denoising}, inspired by nonequilibrium thermodynamics; noise-conditioned score networks \cite{Song2019}, based on a multiscale score-matching objective; and stochastic differential equation–based models \cite{Song2021,Huang2021}. 

In particular, in the field of image generation, DDPM \cite{ho2020denoising} have demonstrated a remarkable ability to produce high-quality samples. Their principle relies on two complementary stages. The first stage, known as the forward diffusion process, progressively adds Gaussian noise to the data until its distribution approaches an isotropic normal distribution. The second stage, the reverse or denoising process, aims to invert this procedure by learning, via a deep neural network, the noise to be removed in order to reconstruct the original data. Training is performed within a probabilistic framework by optimizing a variational lower bound (ELBO) on the likelihood, thereby ensuring the theoretical soundness of the model. Compared to other families of generative models, such as variational autoencoders \cite{Kingma2013,Rezende2014} or generative adversarial networks (GAN) \cite{Goodfellow2014,Goodfellow2017}, PDM are distinguished by the stability of their training and the diversity of the generated samples.

Equivariance plays an important role in most neural network architectures. It means that applying a transformation to the input data and then passing it through the network is equivalent to first passing the data through the network and then transforming the output. This property enables the model to learn the symmetries present in the data. This principle has recently been exploited in molecular generation by combining $\mathrm{E}(n)$-equivariant graph neural networks \cite{Satorras2021} with $\mathrm{E}(3)$-equivariance of the denoising distribution in the diffusion process of DDPM \cite{hoogeboom2022equivariant}. A similar approach has been proposed for 3D molecular generation \cite{Cornet2024} with a learnable forward process. An equivariant diffusion model has also been introduced in \cite{Brehmer2023}, exhibiting $\mathrm{SE}(3)\times\Z\times \mathrm{S}_n$ invariance of the trajectories. An $\mathrm{E}(3)$-equivariant model with $\mathrm{O}(3)$ invariance in the conditional diffusion process has been designed in \cite{Igashov2024} for molecular linker design.

Deep neural networks are naturally invariant to translations. To extend this invariance to other types of transformations, group convolutions neural networks (G-CNN) have been introduced \cite{Cohen2016,Bekkers2018,Cohen2019}, generalizing CNN to incorporate symmetries during learning. G-CNN have shown significant improvements over traditional CNN \cite{Winkels2018,Cohen2018,Bekkers2019}. Recently, a PDE-based framework referred to as PDE-G-CNN has been introduced \cite{Smets13July2022,Bellaard2023} as a generalization of G-CNN. In \cite{Diop2024b}, equivariant PDE-G-CNN were integrated into GAN models, demonstrating substantial improvements in sample quality and increased robustness to geometric transformations of the data.

PDM have been extended to Riemannian manifolds through a continuous-time Riemannian ELBO \cite{Huang2022}. A Riemannian extension of DDPM has recently been proposed to learn distributions supported on submanifolds of $\R^n$ \cite{Liu2025}. Score-matching models have also been generalized to Riemannian manifolds \cite{Bortoli2022}. In addition, a generalized strategy for the numerical computation of the heat kernel on Riemannian symmetric spaces within the denoising score-matching framework has been proposed in \cite{Lou2023}. These approaches primarily aim to define diffusion processes that are consistent with Riemannian geometry or to learn distributions on submanifolds of $\R^n$. However, they generally rely on standard geometric architectures and do not explicitly investigate the impact of equivariant layers nor the integration of morphological operators within the denoising networks of PDM. The main objective of this work is to analyze the impact of equivariant operators on noise prediction, symmetry preservation, and the extraction of fine geometric structures in DDPM.

\paragraph{Contributions}  
DDPM use a U-Net architecture, which theoretically satisfies only the property of translation equivariance, to predict the residual noise at each step of the diffusion process. In this work, we introduce the equivariance property via the action of a Lie group. This allows us to exploit the symmetry group structure and the underlying Riemannian geometry simultaneously. We treat each layer of the neural network as a functional operator and model the feature maps of traditional neural networks as functions. We generalize the equivariance property with respect to the more general Euclidean group of transformations, including rotations, reflections, and permutations in $\R^n$. We also ensure that these operators commute with the group actions defined on their respective function spaces. We introduce a denoising diffusion model defined on general Riemannian manifolds. In this model, the noise prediction network is constructed from equivariant layers based on morphological group-equivariant convolutions and convection operators, which replace standard layers. We summarize our contributions as follows:
\begin{itemize}
\item Construction of a network capable of better capturing fine structures and nonlinearities using convection-dilation-erosion blocks (ResnetCDEBlocks).%\textbf{Manifold-based PDE-G-CNN Architecture:} We construct a network capable of better capturing fine structures and nonlinearities using convection-dilation-erosion blocks (ResnetCDEBlocks).
    \item Proposition of a diffusion framework that integrates group morphological operators. %\textbf{Equivariant ED2RM Diffusion Model:} We propose a diffusion framework integrating morphological PDE operators into the noise prediction network.
    \item Systematic preservation of translation, rotation, reflection, and permutation symmetries, and reduction of sampling complexity. %\textbf{Intrinsic Equivariance:} Our model ensures systematic preservation of translation, rotation, reflection, and permutation symmetries, reducing sampling complexity.   
    \item Faster convergence and superior FID scores compared to baseline DDPM due to the introduction of geometric equivariance. %\textbf{Robustness and Performance:} Experiments on MNIST and RotoMNIST datasets demonstrate faster convergence and superior FID scores compared to baseline DDPM, confirming the efficacy of incorporating geometric equivariance.
\end{itemize}
%\begin{itemize}
% \item Introduction of an equivariant denoising diffusion model (ED2RM) integrating morphological PDE operators into the noise prediction network (see Fig.~\ref{fig:archi}).
% \item Design of a diffusion model intrinsically equivariant to translations, rotations, reflections, and permutations, ensuring the preservation of data symmetries throughout training.
% \item Construction of a prediction network based on PDE-G-CNNs defined on Riemannian manifolds, enabling improved capture of nonlinearities and fine geometric structures.
% \item Geometric interpretability of the proposed ED2RM framework, with model operations corresponding to well-known PDEs such as convection equations and Hamilton--Jacobi equations for multiscale dilations and erosions.
% \item Enhanced extraction of key geometric features during the denoising stage, strengthening the overall robustness of the model.
% \item Improved image generation quality, demonstrated through experiments on MNIST and RotoMNIST, where ED2RM exhibits faster convergence and better FID scores than baseline DDPM.
%\end{itemize}

\begin{figure}[!ht]
%\vskip 0.2in
\centering
\includegraphics[width=\linewidth]{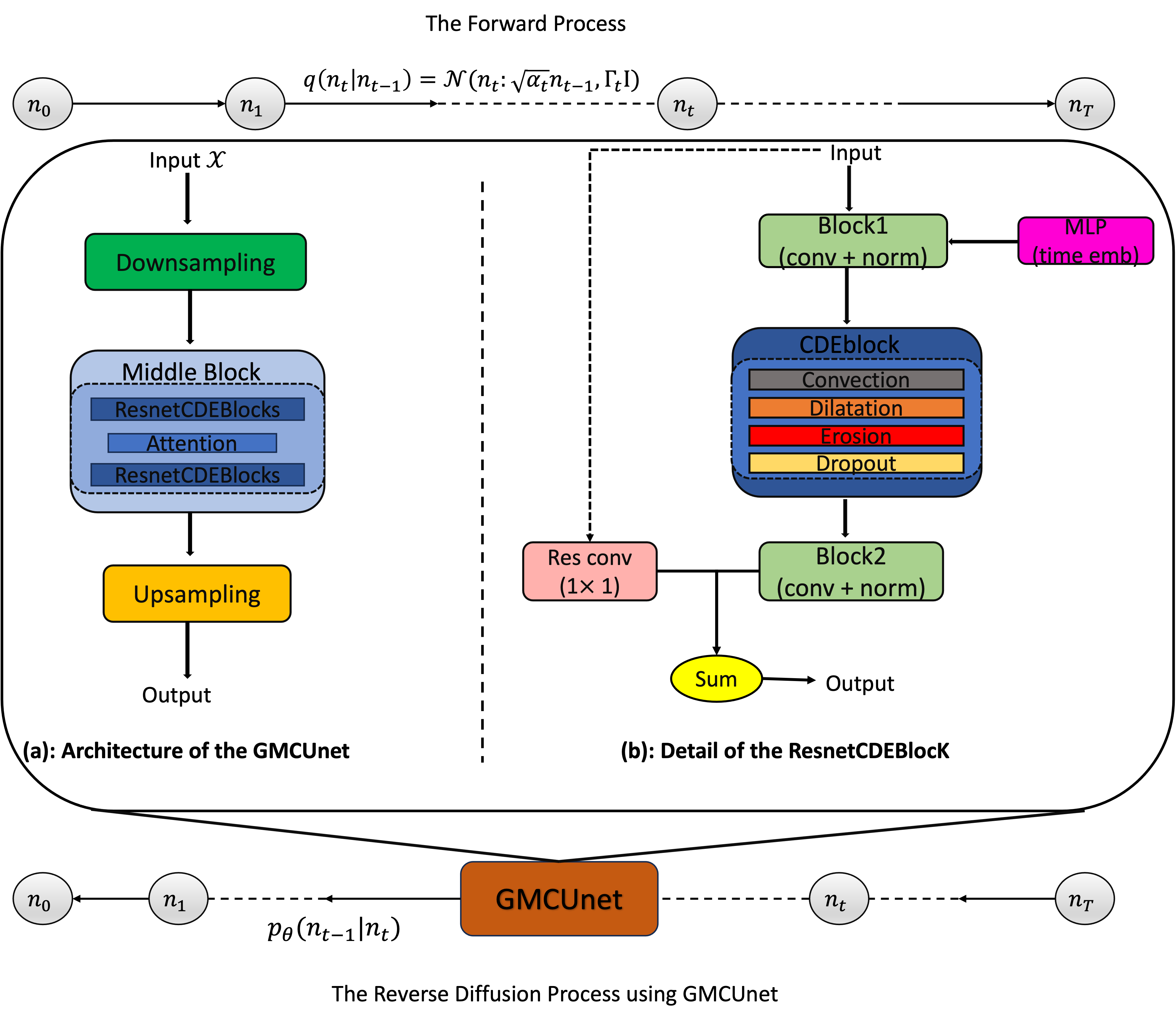}
\caption{Our DEGMC approach uses equivariant group morphological convolution layers integrated into the denoising network referred to as GMCUnet (see Section~\ref{sec:archigmcunet}). This network replaces the standard U-Net architecture used in classical DDPMs during the reverse denoising process. GMCUnet relies on convection, dilation, and erosion (CDE) operations to enforce equivariance with respect to translations, rotations, reflections, and permutations. This improves the extraction and preservation of fine geometric structures throughout the generation process.}
\label{fig:archi}
\end{figure}

%\section{Related Work}

%------------------------------------------------------------------------------
\section{Preliminaries}
In this section, we provide the necessary background on diffusion models, as well as the key concepts from Riemannian geometry and equivariant geometry that we utilize to develop the DEGMC framework. 
%---------------------------------
\subsection{Probabilistic Diffusion Models} 
\label{sec:pdm}

Diffusion models are a class of generative models that operate by progressively corrupting real data with random noise and subsequently learning to reverse this corruption. This denoising process is performed by a neural network that recovers the original data distribution from pure noise. For a more comprehensive derivation, refer to Appendix \ref{annexe:diffusion}.

\paragraph{Forward Process}
The forward (diffusion) process transforms data $n_0$ into a sequence of increasingly noisy latent variables $\{n_1, \dots, n_T\}$. Each step $t$ is modeled as a Markov chain where noise is added via a conditional Gaussian distribution:
\begin{equation*}
    q(n_t \mid n_{t-1}) = \mathcal{N}(n_t ; \sqrt{\alpha_t} n_{t-1}, (1 - \alpha_t) I) \label{eq:eq10}
\end{equation*}
Following the variance-preserving schedule established by \cite{ho2020denoising}, we define the noise level as $\Gamma_t = 1 - \alpha_t$. Here, $\alpha_t \in (0,1)$ dictates the signal-to-noise ratio at each step; as $t$ increases, the mean $\sqrt{\alpha_t} n_{t-1}$ decays, and the signal progressively vanishes into a standard normal distribution.

\paragraph{Inverse Generative Process}
The generative process reverses the forward diffusion by sampling from the true posterior $q(n_{t-1} \mid n_t)$. While this true distribution is intractable, it can be approximated as a Gaussian for sufficiently small noise steps. We utilize a neural network $\phi$, parameterized by $\theta$, to learn the transition $p_{\theta}(n_{t-1} \mid n_t)$:
\begin{equation*}
    p_{\theta}(n_{t-1} \mid n_t) = \mathcal{N}(n_{t-1} ; \mu_{\theta}(n_t, t), \Sigma_{\theta}(n_t, t)) \label{eq:dgeneratif}
\end{equation*}
where $\mu_{\theta}$ and $\Sigma_{\theta}$ represent the predicted mean and covariance at iteration $t$. Following \cite{ho2020denoising}, the covariance is often simplified to a non-learnable time-dependent constant, $\Sigma_{\theta}(n_t, t) = \sigma_{t}^{2} I$.

\paragraph{Variational Lower Bound of the Likelihood} 
The optimization problem for the Evidence Lower Bound (ELBO) with respect to $\theta$ is given by:
\begin{align}
\underset{\theta}{\operatorname{minimize}} \;\;
& \sum_{t=2}^{T}
\operatorname{KL}\!\left(
q\left(n_{t-1} \mid n_{t}, n_{0}\right)
\,\|\, 
p_{\theta}\left(n_{t-1} \mid n_{t}\right)
\right) \notag \\
& \quad
- \E_{q\left(n_{1:T} \mid n_{0}\right)}
\!\left[
\log p_{\theta}\left(n_{0} \mid n_{1}\right)
\right].
\label{eq:elbo}
\end{align}
This equation trains the inverse distribution $p_{\theta}\!\left(n_{t-1} \mid n_{t}\right)$ to match the true denoising distribution $q\!\left(n_{t-1} \mid n_{t}, n_{0}\right)$ by minimizing their KL divergence. It can therefore be used as a loss function for a neural network parameterized by $\theta$, emphasizing the alignment between these two distributions. Details leading to \eqref{eq:elbo} are provided in Appendix~\ref{annexe:elbo}.

\subsection{U-net architecture in DDPM} 
%%%%%%%%%%%%%%%%%%%%%%%%%%%%%%%%%%%%%%%%

The denoising function $\phi$, parameterized by $\theta$, typically uses a \textbf{U-Net} architecture to predict the residual noise at each step of the diffusion process. Its hierarchical structure enables multiscale feature extraction through a symmetric encoder-decoder scheme. While a standard U-Net is theoretically \textbf{translation equivariant}—meaning a spatial shift of the input noise $n_t$ induces a corresponding shift in the output—it does not exhibit intrinsic equivariance with respect to the more general Euclidean group $E(n)$. This group includes rotations, reflections, and permutations in $\R^n$. 
%In this work, we consider each layer of the neural network as a functional operator and model the feature maps of conventional neural networks as functions. We introduce the notion of equivariance with respect to additional group transformations to ensure that these operators commute with the group actions defined on their respective function spaces. This property is typically achieved by replacing standard layers with the introduced morphological group equivariant convolutions.

\subsection{Equivariance}
%%%%%%%%%%%%%%%%%%%%%%%%%%

%In this section, we consider, in general, a connected Riemannian manifold $(\gM,\mathbf{g})$.
%\begin{definition}  
%Let $G$ be a connected Lie group with identity element $e$ and $(\gM,\mathbf{g})$ a connected Riemannian manifold $\gM$ with metric $g$. A left action of $G$ on $(\gM,\mathbf{g})$ is a map $\varphi : G \times (\gM,\mathbf{g}) \to (\gM,\mathbf{g})$ satisfying:  
%\begin{enumerate}  
%\item $\varphi(e,x) = x$, $\forall~x \in (\gM,\mathbf{g})$.  
%\item $\varphi(g, \varphi(h,x)) = \varphi(gh, x)$, $\forall~g,h \in G$ and $\forall~x \in (\gM,\mathbf{g})$.  
%\end{enumerate}  
%\end{definition}  

Let $G$ be a connected Lie group with identity element $e$ and $(\gM,\mathbf{g})$ a connected Riemannian manifold $\gM$ with metric $g$.

Let $\varphi: G \times (\gM,\mathbf{g}) \to (\gM,\mathbf{g})$ be a left action of $G$ on $(\gM,\mathbf{g})$. For a fixed $g \in G$, we define $\varphi_g$:$$\varphi_g: (\gM,\mathbf{g}) \to (\gM,\mathbf{g}), x \mapsto \varphi_g(x) = \varphi(g,x).$$

The map $\varphi: G \times (\gM,\mathbf{g}) \to (\gM,\mathbf{g})$ is a left action if, for all $g,h \in G$, we have: $$\varphi_e = id_M \text{ and } \varphi_g \circ \varphi_h = \varphi_{gh}.$$

Let $\varphi_h : (\gM,\mathbf{g}) \longrightarrow (\gM,\mathbf{g})$ denote the left group action (considered here as a multiplication) by an element \(h \in G\), defined for every \(x \in (\gM,\mathbf{g})\) by:  $$\varphi_h(x) = h \cdot x.$$  
Let \(\mathcal{L}_h\) denote the left regular representation of \(G\) on functions \(f\) defined on \(\gM\), given by: $$(\mathcal{L}_h f)(x) = f(\varphi_{h^{-1}}(x)),$$ where $h^{-1}$ is the inverse of $h \in G$.

Let \(x_0\) be an arbitrary fixed point on the connected Riemannian manifold $(\gM,\mathbf{g})$. Let $\pi : G \rightarrow  (\gM,\mathbf{g})$ denote the projection defined by associating to each element \(h\) of \(G\) a point in $(\gM,\mathbf{g})$ as follows: $$\forall~h \in G, \pi(h) = \varphi_h(x_0).$$
In other words, once a reference point $x_0 \in (\gM,\mathbf{g})$ is chosen, the projection \(\pi(h)\) associates to each element \(h\) of \(G\) the unique point in $(\gM,\mathbf{g})$ to which \(h\) sends \(x_0\) under the action \(\varphi_h\).

Let us consider a connected Lie group \(G\) acting transitively on the connected Riemannian manifold $(\gM,\mathbf{g})$. This means that for any points \(x, y \in (\gM,\mathbf{g})\), there exists an element \(h \in G\) such that $\varphi_h(x) = y$, which corresponds to the definition of a homogeneous space under \(G\). We define the notion of equivariance as follows:
%
%We view a layer in a neural network as an operator. To ensure network equivariance, we require the operator to be equivariant with respect to the group actions on the corresponding function spaces.
%
\begin{definition}\label{def:equivar} 
Let \(G\) be a connected Lie group with homogeneous spaces \(\gM\) and \(\gN\), and \(\phi\) an operator mapping functions from \(\gM\) to \(\gN\). \(\phi\) is equivariant with respect to \(G\) if, for all functions \(f\), we have: $\forall~h \in G$, $$(\phi \circ \mathcal{L}_h) f = (\mathcal{L}_h \circ \phi) f;$$ where \(\circ\) denotes the composition of operators.
\end{definition}

%%%%%%%%%%%%%%%%%%%%%%%%%%%%%%%%%%%%%%%%%%
\section{Proposed DEGMC diffusion model}
%%%%%%%%%%%%%%%%%%%%%%%%%%%%%%%%%%%%%%%%%%%
%In this section, we present our proposed model, \textit{Equivariant Denoising Diffusion Models based on Riemannian Morphological PDEs} (DEGMC), by outlining all the theoretical and conceptual contributions that distinguish it from classical DDPM.  
%Like the latter, the proposed model is based on the same principle described in Section \ref{sec:pdm}: it consists of two stages, a noise diffusion phase and a denoising (or reverse) process.\\
DEGMC preserves both the DDPM forward process and the ELBO. The reverse process involves using morphological group convolutions and convection operators to obtain an equivariant network for noise prediction.

%-------------------------------------------------------------
\subsection{Equivariant morphological layers model prediction }%
%-------------------------------------------------------------

PDE-G-CNNs were formally introduced in homogeneous spaces with $G$-invariant tensor metric fields on quotient spaces \cite{Diop2024b}. Building on this foundational approach, the proposed DEGMC model combines traditional CNNs with group morphological convolution layers based on Hamilton-Jacobi PDEs defined on Riemannian manifolds and convection layers. The network's output is obtained as a linear combination of these terms. This system represents our stepwise model, which is solved using operator splitting; each step corresponds to one of the preceding terms. Below, we detail each counterpart of the system:  

\subsubsection{Convection term}
%%%%%%%%%%%%%%%%%%%%%%%%%%%%%%%%%
In our architecture, we use the convection part as a learned resampling operator that moves and aligns features in space according to the underlying geometry of the data. The convection term is obtained by solving the following PDE:
\begin{align}
&\dfrac{\partial u}{\partial t} + \alpha u = 0 \text{ in } (\gM,\mathbf{g})\times (0,~\infty)\nonumber\\
& u(\cdot, 0) = f \text{ on } (\gM,\mathbf{g}), \label{eq:convec}
\end{align}
where \(\alpha\) is a $G$-invariant vector field on $(\gM,\mathbf{g})$. The convection (\ref{eq:convec}) is left-invariant under the action of $G$. PDE \eqref{eq:convec} is solved using the method of characteristics, and its solution is given by the result stated below:
\begin{proposition}\label{pr:convec}
The solution of (\ref{eq:convec}) is obtained using the method of characteristics and is given by:
\begin{align*}
u(x, t) & = (\mathcal{L}_{h_{x}^{-1}} f )~ (\gamma_{c}(t)^{-1} x_{0}) \notag \\ & = f(h_{x} \gamma_{c}(t)^{-1} x_{0}) = f(h_{x}  \gamma_{-c}(t) x_{0}), \label{eq:c3eq16}
\end{align*}
where $h_{x} \in G$ satisfies $h_{x} x_{0} = x$ for a fixed $x_0 \in M$, and $\gamma_{c}: R \to G$ is the exponential curve such that $\gamma_{c}(0) = e$ and
\begin{equation*}
\frac{\partial}{\partial t} (\gamma_{c}(t)x)(t) = c (\gamma_{c}(t)x).\label{eq:c3eq17}
\end{equation*}
\ie~$\gamma_{c}$ is the exponential curve in the group $G$ that induces the integral curves of the $G$-invariant vector field $c$ on $\mathcal{M}$ when acting on the elements of the homogeneous space.
\end{proposition}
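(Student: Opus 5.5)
The PDE \eqref{eq:convec} is meant to be the transport equation $\partial_t u + \alpha u = 0$, with $\alpha u$ read as the derivative of $u$ along the $G$-invariant vector field $\alpha$; in the statement this field is written $c$. The plan is the classical method of characteristics. First find curves along which $u$ is constant. Then move the initial datum along them. Finally use the transitivity and $G$-invariance to write these curves as group orbits, $h_x\gamma_{c}(t)x_0$. I will take as given the standard fact that a $G$-invariant vector field on a homogeneous space is induced by a one-parameter subgroup, namely the exponential curve $\gamma_c$ of the statement. In particular, $t\mapsto \gamma_c(t)y$ is the integral curve of $c$ through $y$ for every $y\in\gM$.

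\textbf{Step 1 (characteristics).} Fix $x$ and consider $\beta(s)=\gamma_c(s)\gamma_c(t)^{-1}\cdot y$ for a base point $y$, with $s\in[0,t]$. Set $w(s)=u(\beta(s),s)$. By the chain rule,
\[
w'(s) = \partial_t u(\beta(s),s) + \big(c\,u\big)(\beta(s),s) = 0,
\]
since $\beta'(s)=c(\beta(s))$ by the defining property of $\gamma_c$. Hence $u(\beta(t),t)=u(\beta(0),0)=f(\gamma_c(t)^{-1}y)$. We choose $y$ so that $\beta(t)=y=x$. The key point is that $\gamma_c(s)$ and $\gamma_c(t)^{-1}$ commute, being in the same one-parameter subgroup. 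This gives the formula $u(x,t)=f(\gamma_c(t)^{-1}x)$.

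\textbf{Step 2 (rewriting via $h_x$).} The statement is phrased with the base point, so write $x=h_x x_0$. The next step is to move $\gamma_c(t)^{-1}$ past $h_x$, and this is where the $G$-invariance of $c$ is used, in its precise form. Left-invariance, $(\varphi_h)_* c=c$, means that $h$ maps integral curves of $c$ to integral curves. Thus $h_x\gamma_c(s)x_0$ is the integral curve through $x$, which gives
\[
\gamma_c(s)\,h_x x_0 = h_x\,\gamma_c(s)\,x_0 .
\]
The curve of Step 1 is then $h_x\gamma_c(s-t)x_0$, which yields $u(x,t)=f(h_x\gamma_c(t)^{-1}x_0)$. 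By the definition of $\mathcal{L}$, this equals $(\mathcal{L}_{h_x^{-1}}f)(\gamma_c(t)^{-1}x_0)$. Finally, $\gamma_c(t)^{-1}=\gamma_c(-t)=\gamma_{-c}(t)$ by the one-parameter group property, which gives the last equality.

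\textbf{Main obstacle.} I expect the hardest part to be justifying $\gamma_c(s)h_x x_0=h_x\gamma_c(s)x_0$. In general the integral curve through $h_x x_0$ is $h_x\exp(sA)x_0$, which is not $\exp(sA)h_x x_0$. So the statement should be read with $\gamma_c$ acting \emph{after} $h_x$, as the statement's form $h_x\gamma_c(t)^{-1}x_0$ indicates. I would therefore carry out Step 1 directly with the characteristic $\beta(s)=h_x\gamma_c(s-t)x_0$. Invariance gives $\beta'(s)=(\varphi_{h_x})_*c=c(\beta(s))$, so this curve is an integral curve of $c$. Doing it this way avoids the commutation issue entirely.

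Two further points need checking. The first is well-definedness: the choice of $h_x$ is unique only up to the stabilizer $H$ of $x_0$. Since $c$ is $G$-invariant, $c(x_0)$ is $H$-invariant, so the integral curve through $x_0$, and hence the formula, does not depend on the representative. The second is uniqueness of the solution, which follows from uniqueness of integral curves of the smooth vector field $c$.
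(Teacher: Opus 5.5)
Your final argument is correct and is essentially the standard method-of-characteristics proof in Smets et al.~\cite{Smets13July2022}, to which the paper simply defers: the characteristic $\beta(s)=h_x\gamma_c(s-t)x_0$ is an integral curve of $c$ because $\gamma_c(\cdot)x_0$ is one and $\varphi_{h_x}$ preserves $c$, so $u(x,t)=u(\beta(0),0)=f(h_x\gamma_c(t)^{-1}x_0)$. Keep only that version and discard Steps~1--2 as written, since they use that $\gamma_c(t)y$ is the integral curve of $c$ through \emph{every} $y$ (hence $\gamma_c(s)h_xx_0=h_x\gamma_c(s)x_0$); this fails in general, because for an effective action it would force the generator of $\gamma_c$ to be $\mathrm{Ad}$-invariant (so it fails for every nonzero left-invariant field on $SE(2)$), and the defining property of $\gamma_c$ must therefore be read at $x_0$ only, exactly as your last paragraph diagnoses.
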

\begin{proof}
See \cite{Smets13July2022}.
\end{proof}

%%%%%%%%%%%%%%%%

\subsubsection{Equivariant group morphological convolution}
%%%%%%%%%%%%%%%%%%%%%%%%%%%%%%%%%%%%%%%%%%%%%%%%%%%%%%%%%%%%%%%%
%The connection between multiscale morphological dilations and erosions had already been established by solving a first-order Hamilton–Jacobi type PDE in \(\R^n\). Their extensions in Riemannian manifolds can be provided by properly defining the related Hamiltonian. 
Here, we introduce the notion of group morphological convolutions, which are equivariant in the sense of Definition~\ref{def:equivar}. These are derived from the multiscale morphological dilations and erosions, which are the viscosity solutions of first-order Hamilton–Jacobi type PDEs defined on compact Riemannian manifolds.

Let \((\gM, \mathbf{g})\) be a compact, connected Riemannian manifold equipped with a metric \(\mathbf{g}\), and let \(f: (\gM, \mathbf{g}) \longrightarrow \R\). Let \(T\gM\) denote the tangent bundle of \((\gM, \mathbf{g})\), and let \(L : T\gM \to \R\) be a Lagrangian function. Let \(T^\ast\gM\) denote the cotangent bundle of \((\gM, \mathbf{g})\), and let us define the Hamiltonian \(H : T^\ast\gM \to \R\) associated with the Lagrangian \(L\) by:
\[H(x, q) = \underset{v \in T_x \gM}{\sup} \{q(v) - L(x, v)\}.\] 
We define the related Hamilton–Jacobi PDE in a Riemannian manifold as follows:
\begin{align}
\label{pde}
&\dfrac{\partial u}{\partial t} + H(x, \nabla u) = 0 \quad \text{in } (\gM, \mathbf{g}) \times (0, +\infty);\notag \\ 
&u(\cdot, 0)  = f \text{ on } (\gM, \mathbf{g}).
\end{align}
PDE~(\ref{pde}) admits unique viscosity solutions \cite{Fathi2008,Diop2021}. Multiscale morphological erosions (\resp~multiscale morphological dilations) are obtained by taking \(H = \lVert\nabla_{\mathbf{g}} u\rVert_{\mathbf{g}}^k\) (\resp~\(H = -\lVert\nabla_{\mathbf{g}} u\rVert_{\mathbf{g}}^k\)) in \eqref{pde}:

\begin{proposition}
\label{prop:conv}
Let \(k>1\), $c_k = \frac{k-1}{k^{\frac{k}{k-1}}}$, and a continuous function \(f \in C^0( (\gM,\mathbf{g}), \R)\). The unique viscosity solution of the Cauchy problem:
\begin{align}
\label{eq:c3eq28}
&\dfrac{\partial u}{\partial t} + \|\nabla_{\mathbf{g}} u\|_{\mathbf{g}}^k  = 0 \quad \text{in } (\gM,\mathbf{g}) \times (0, +\infty); \notag \\
& u(\cdot,0) = f \text{ on } (\gM,\mathbf{g}),
\end{align} 
is given by:
\begin{equation}
\label{multi_ero}
u(t,x)= \inf_{h \in G} \left\{ f\big(\varphi_h(x_0)\big) + c_k  \frac{d_\mathbf{g}\big(\varphi_{h^{-1}}(x),x_0\big)^{\frac{k}{k-1}}}{t^{\frac{1}{k-1}}} \right\}.
\end{equation} 
\end{proposition}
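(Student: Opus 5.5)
The plan has two stages. First, derive a Hopf--Lax formula on the compact manifold. Second, rewrite that formula as an infimum over the group using transitivity of the action.

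\emph{Stage 1: the Hopf--Lax formula.} The Hamiltonian here is \(H(x,q)=\|q\|_{\mathbf{g}}^k\). We first compute its Legendre dual, \(L(x,v)=\sup_q\{q(v)-\|q\|^k\}\). Restricting to \(q\) parallel to \(v\) and optimizing the scalar function \(s\|v\|-s^k\) gives the maximizer \(s=(\|v\|/k)^{1/(k-1)}\). Hence
\[
L(x,v)=c_k\|v\|_{\mathbf{g}}^{k/(k-1)},\qquad c_k=\tfrac{k-1}{k^{k/(k-1)}}.
\]
The Hamiltonian is convex and superlinear, and the Lagrangian is Tonelli on a compact manifold. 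Fathi's weak KAM theory \cite{Fathi2008} (see also \cite{Diop2021}) then says the unique viscosity solution of (\ref{eq:c3eq28}) is given by the Lax--Oleinik semigroup:
\[
u(t,x)=\inf_{y\in\gM}\inf_{\gamma}\Big\{f(y)+\int_0^t L(\gamma,\dot\gamma)\,ds\Big\},
\]
where \(\gamma\) ranges over absolutely continuous curves with \(\gamma(0)=y\) and \(\gamma(t)=x\).

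Next we evaluate the action. By Jensen's inequality (since \(r\mapsto r^{k/(k-1)}\) is convex), we have \(\int_0^t\|\dot\gamma\|^{k/(k-1)}\ge t\,(\ell(\gamma)/t)^{k/(k-1)}\). Equality holds for constant-speed curves, and by Hopf--Rinow a minimizing geodesic exists on the compact manifold. Therefore the inner infimum equals \(c_k\, d_{\mathbf{g}}(y,x)^{k/(k-1)}/t^{1/(k-1)}\). This gives the intrinsic Hopf--Lax formula
\[
u(t,x)=\inf_{y\in\gM}\Big\{f(y)+c_k\,\frac{d_{\mathbf{g}}(y,x)^{k/(k-1)}}{t^{1/(k-1)}}\Big\}.
\]

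\emph{Stage 2: passing to the group.} The action is transitive, so \(\pi:h\mapsto\varphi_h(x_0)\) is onto \(\gM\). We may therefore substitute \(y=\varphi_h(x_0)\) and take the infimum over \(h\in G\) without changing its value. It remains to show \(d_{\mathbf{g}}(\varphi_h(x_0),x)=d_{\mathbf{g}}(x_0,\varphi_{h^{-1}}(x))\). This holds once \(G\) acts by isometries of \(\mathbf{g}\), which is the standing assumption of \(G\)-invariant metrics on homogeneous spaces underlying the PDE-G-CNN framework. The identity follows by applying the isometry \(\varphi_{h^{-1}}\) to both points and using \(\varphi_{h^{-1}}\circ\varphi_h=\varphi_e=id\). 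Finally, symmetry of \(d_{\mathbf{g}}\) yields exactly (\ref{multi_ero}).

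As a byproduct, this group form makes equivariance in the sense of Definition~\ref{def:equivar} transparent: replacing \(f\) by \(\mathcal{L}_g f\) amounts to reindexing \(h\mapsto g^{-1}h\).

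\emph{Main obstacle.} The delicate step is Stage 1: confirming that the Lax--Oleinik value function is the unique viscosity solution for merely continuous initial data \(f\), including the correct attainment of the initial condition as \(t\to0\). The plan is to invoke the comparison principle and the representation theorem of \cite{Fathi2008} directly. Should those results require Lipschitz data, I would extend to continuous \(f\) by uniform approximation, using compactness of \(\gM\) and the fact that the Hopf--Lax operator is monotone and nonexpansive in the sup norm, together with stability of viscosity solutions.

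A secondary point is to state the isometric-action hypothesis explicitly, since without it the group reformulation fails.
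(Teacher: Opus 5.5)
Your proposal is correct and follows essentially the route behind the paper's proof, which gives no argument of its own and defers to \cite{Diop2024b}: the Riemannian Hopf--Lax (Lax--Oleinik) formula with $L=c_k\|v\|_{\mathbf{g}}^{k/(k-1)}$, reindexed over $G$ by transitivity and the $G$-invariance of $d_{\mathbf{g}}$ (Proposition~\ref{inv_geod}), which is exactly the isometric-action hypothesis you rightly make explicit. One correction: $c_k\|v\|_{\mathbf{g}}^{k/(k-1)}$ is a Tonelli Lagrangian only for $k=2$ (for $k\neq 2$ it is not $C^2$ with positive definite fiber Hessian at $v=0$), so the appeal to \cite{Fathi2008} should go through your approximation-and-stability fallback or a representation theorem for merely convex, superlinear Hamiltonians.
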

\begin{proof}
See \cite{Diop2024b}.
\end{proof}
Morphological multiscale Riemannian dilations at scale $t$ are obtained by reversing time:
\begin{align}\label{eq:dil}
& \dfrac{\partial w}{\partial t} - \|\nabla_{\mathbf{g}} w\|_{\mathbf{g}}^k = 0 \quad \text{in } (\gM,\mathbf{g}) \times (0,+\infty); \notag \\ 
& w(\cdot,0)  = f \text{ on } (\gM,\mathbf{g}).
\end{align} 
The viscosity solution of the Cauchy problem \eqref{eq:dil} is obtained in a similar way and is given by:
\begin{equation}
\label{multi_dil}
w(t,x)= \sup_{h \in G} \left\{ f\big(\varphi_h(x_0)\big) - c_k  \frac{d_\mathbf{g}\big(\varphi_{h^{-1}}(x),x_0\big)^{\frac{k}{k-1}}}{t^{\frac{1}{k-1}}} \right\}.
\end{equation}
We introduce the notion of group Riemannian morphological convolution below.
\begin{definition}
\label{def:conv_gp}
Let \(f,b : (\gM, \mathbf{g}) \longrightarrow \R\). The group morphological convolutions $\triangledown$ and $\vartriangle$ between \(b\) and \(f\) are respectively defined for all \(x \in (\gM, \mathbf{g})\) by:
\begin{align*}
&(f \triangledown b)(x) = \inf_{h \in G} \{f(\varphi_h(x_0)) + b(\varphi_{h^{-1}}(x))\} \text{ (erosion)}\\
&(f \vartriangle b)(x) = \sup_{h \in G} \{f(\varphi_h(x_0)) - b(\varphi_{h^{-1}}(x))\} \text{ (dilation)}
\end{align*}
\end{definition}
Operators $\triangledown$ and $\vartriangle$ are dual due to the duality between the infimum and supremum operations, we have:
$$f \triangledown b = -(f \triangledown (-b)).$$ The morphological operations are equivariant with respect to $G$ in the sense of Definition~\ref{def:equivar}, due to the following result:
\begin{proposition}\label{inv_geod}
Let \(x, y \in (\gM,\mathbf{g})\) such that \(\varphi_h(y)\) lies outside the cut locus of \(\varphi_h(x)\). Then, for all \(h \in G\), we have: $$d_{\mathbf{g}}(x,y) = d_{\mathbf{g}}\big(\varphi_h(x), \varphi_h(y)\big).$$
\end{proposition}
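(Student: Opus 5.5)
The statement can only hold if each $\varphi_h$ preserves the metric. The earlier sections speak of a $G$-invariant metric on the homogeneous space (the setting of \cite{Diop2024b}), so we take as given that $\varphi_h^\ast\mathbf{g}=\mathbf{g}$ for every $h\in G$, i.e.\ that each $\varphi_h$ is an isometry. The plan is to show that $\varphi_h$ maps minimizing geodesics to minimizing geodesics. The cut-locus hypothesis then serves to fix a unique minimizing geodesic. Its role is to let us identify $d_{\mathbf{g}}$ with the length of that particular curve, rather than with an infimum over curves whose minimizer might be ambiguous.

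\textbf{Step 1: lengths of curves are preserved.} Let $\gamma:[0,1]\to\gM$ be piecewise $C^1$ with $\gamma(0)=x$ and $\gamma(1)=y$. Then $\varphi_h\circ\gamma$ joins $\varphi_h(x)$ to $\varphi_h(y)$. By the chain rule and invariance of the metric,
\[
\|(\varphi_h\circ\gamma)'(s)\|_{\mathbf{g}}=\|\ud\varphi_h(\gamma'(s))\|_{\mathbf{g}}=\|\gamma'(s)\|_{\mathbf{g}},
\]
so $L(\varphi_h\circ\gamma)=L(\gamma)$. Taking the infimum over all such $\gamma$ gives $d_{\mathbf{g}}(\varphi_h(x),\varphi_h(y))\le d_{\mathbf{g}}(x,y)$. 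Applying the same argument to $\varphi_{h^{-1}}$ and using $\varphi_{h^{-1}}\circ\varphi_h=\varphi_e=id_\gM$ (the left-action axioms) gives the reverse inequality. Hence the two distances are equal.

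\textbf{Step 2: the geodesic formulation.} $\gM$ is compact and connected, so by Hopf--Rinow there is a minimizing geodesic $\gamma$ from $x$ to $y$, and $d_{\mathbf{g}}(x,y)=L(\gamma)$. An isometry commutes with the exponential map: $\varphi_h(\exp_x v)=\exp_{\varphi_h(x)}(\ud\varphi_h v)$. Therefore $\varphi_h\circ\gamma$ is a geodesic of the same length. Since $\varphi_h(y)$ lies outside the cut locus of $\varphi_h(x)$, the point $\varphi_h(y)$ is joined to $\varphi_h(x)$ by a unique minimizing geodesic, and $d_{\mathbf{g}}(\varphi_h(x),\cdot)$ is smooth near $\varphi_h(y)$. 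Since isometries map cut loci to cut loci, $y$ is likewise outside the cut locus of $x$. Uniqueness of minimizers on both sides then forces $\varphi_h\circ\gamma$ to be \emph{the} minimizing geodesic, which gives the equality of distances again. This is the form needed when differentiating $d_{\mathbf{g}}^{k/(k-1)}$ in the kernels \eqref{multi_ero}--\eqref{multi_dil}.

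\textbf{Main obstacle.} The real difficulty is the isometry assumption itself. Nothing in the statement says $\varphi_h$ preserves $\mathbf{g}$, and a general transitive action is not isometric. I would therefore make $G$-invariance of $\mathbf{g}$ explicit, citing the invariant tensor field setting of \cite{Diop2024b}. Once that assumption is in place, Step 1 is routine.
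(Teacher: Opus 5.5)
Your Step 1 is the paper's proof. The paper also uses $\varphi_h$ and $\varphi_{h^{-1}}$ to set up a bijection between $C^1$ curves joining $x,y$ and curves joining $\varphi_h(x),\varphi_h(y)$, invokes its appendix assumption that the metric tensor is $G$-invariant to conclude that lengths are preserved, and takes infima, so your argument is correct and follows the same route. Your Step 2 is superfluous, since neither proof actually uses the cut-locus hypothesis, and taken on its own it would be circular: knowing that $\varphi_h\circ\gamma$ is \emph{the} minimizing geodesic already requires the distance equality established in Step 1.
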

\begin{proof}
See Appendix~\ref{apA}.
\end{proof}
For multiscale operations, we consider the family of functions $(b^{k}_{t})$ defined by: $$b^{k}_{t}(\cdotp) = c_k \dfrac{d_g(x_0,\cdot)^{\frac{k}{k-1}}}{t^{\frac{1}{k-1}}}.$$ 
The case $k=2$ corresponds to quadratic structuring functions. Letting \(k>1\) allows us to deal with more general structuring functions than quadratic ones, leading to a better handling of thin data structures.

Morphological multiscale Riemannian erosions (\ref{multi_ero}) and dilations (\ref{multi_dil}) at scale $t$ can now be formulated using the group convolutions as follows:
\begin{equation*}
u(t,x)= (f \triangledown  b_{t}^{k})(x) \text{ and } w(t,x)= (f \vartriangle b_{t}^{k})(x).
\end{equation*}

\subsubsection{Hyperbolic Ball example: metric, invariance and embedding}
%%%%%%%%%%%%%%%%%%%%%%%%%%%%%%%%%%%%%%%%%%%%%%%%%%%%%%%%%%%%%%%%%%%%%%%%%%%

For computational purposes, let us consider the hyperbolic ball $\B^{n}$, as it is a compact Riemannian manifold and provides a natural framework to represent the equivariance with respect to the Euclidean group $E(n)$ of translations, rotations, reflections, and permutations in $\R^n$.  $\B^{n}$ has a negative curvature that allows for effective capture of hierarchical and non-local relationships between points, while ensuring that distances, invariant under the transformations of the group $E(n)$ are preserved. This facilitates the definition of stable equivariant operators and improves learning in neural networks based on PDE-GCNNs. $\B^{n}$ is defined by:
\begin{equation*}
\B^{n} = \left\{ \left(x_{1}, \ldots,x_n\right) \in \R^{n} \mid \sum_{i=0}^{n} x_{i}^{2} < 1 \right\}.
\end{equation*}
Let us take $\gM=\B^{n}$ endowed with the metric $g$ defined as follows:
\begin{equation*}
\mathbf{g} = \frac{4(dx_1^2 + \ldots + dx_n^2)}{(1 - \lVert x \rVert^2)^2},
\end{equation*}
where $\lVert \cdot \rVert$ represents the Euclidean norm in $\R^n$. Next, we show that the hyperbolic distance $\mathrm{d}_{\B^{n}}$ induced by $g$ is invariant under translations, rotations, reflections, and permutations. We also show an embedding of $\R^n$ into $\B^n$, which will preserve data structures within the hyperbolic ball $\B^n$, enabling the desired equivariance.

The length of a curve $\gamma : [a, b] \rightarrow \B^{n}$ is given by:
\begin{align*}
L(\gamma) & = \int_{a}^{b} \sqrt{g(\gamma'(t), \gamma'(t))} \ud t \notag \\ &  = \int_{a}^{b} 2 \frac{\sqrt{\gamma'_1(t)^2 + \ldots + \gamma'_n(t)^2}}{\sqrt{1 - (\gamma_1(t)^2 + \ldots + \gamma_n(t)^2)}} \ud t,
\end{align*}
where $\gamma(t) = (\gamma_1(t), \ldots, \gamma_n(t))$.  

The distance between two points $x, y \in \B^{n}$ is the infimum over all curves that join $x$ and $y$. Then, the hyperbolic distance $d_{\B^{n}}(x, y)$ between $x$ and $y$ is given by:
\begin{equation*}
\cosh\mathrm{d}_{\B^{n}}(x, y) = 1 + \frac{2\lVert x - y \rVert^2}{(1 - \lVert x \rVert^2)(1 - \lVert y \rVert^2)},
\end{equation*}
and thus, we derive:
\begin{equation*}
\mathrm{d}_{\B^{n}}(x, y) = \operatorname{Argcosh} \left(1 + \frac{2\lVert x - y \rVert^2}{(1 - \lVert x \rVert^2)(1 - \lVert y \rVert^2)}\right).
\end{equation*}
Since $\mathrm{d}_{\B^n}$ depends only on the Euclidean norm, which is invariant under Euclidean isometries, $\mathrm{d}_{\B^n}$ is invariant under all elements of $E(n)$, as stated in the following result:  
\begin{proposition}
\label{inv_dbn}
 $\mathrm{d}_{\B^{n}}$ is invariant under Euclidean transformations.
 \end{proposition}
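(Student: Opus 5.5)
The plan is to work directly from the closed form
\[
\cosh \mathrm{d}_{\B^{n}}(x,y) = 1 + \frac{2\lVert x-y\rVert^2}{(1-\lVert x\rVert^2)(1-\lVert y\rVert^2)}.
\]
Because $\operatorname{Argcosh}$ is injective on $[1,\infty)$, it suffices to show that the quantity
\[
\delta(x,y) := \frac{\lVert x-y\rVert^2}{(1-\lVert x\rVert^2)(1-\lVert y\rVert^2)}
\]
is unchanged when $x,y$ are replaced by $\psi(x),\psi(y)$, for each transformation $\psi$ under consideration. We must also check that each such $\psi$ maps $\B^n$ onto itself, so that $\mathrm{d}_{\B^n}(\psi(x),\psi(y))$ is defined.

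\textbf{Linear part.} Rotations, reflections and coordinate permutations are all of the form $x\mapsto Ax$ with $A\in \mathrm{O}(n)$; permutation matrices and Householder reflections are orthogonal. For such $A$ we have $\lVert Ax\rVert=\lVert x\rVert$, so $A(\B^n)=\B^n$. We also have $\lVert Ax-Ay\rVert=\lVert A(x-y)\rVert=\lVert x-y\rVert$. Hence every factor of $\delta$ is preserved term by term, and $\delta(Ax,Ay)=\delta(x,y)$. This is a one-line verification.

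\textbf{Translations.} This is the step I expect to be the real obstacle. A literal Euclidean translation $x\mapsto x+a$ with $a\neq 0$ does not map $\B^n$ into itself, and it changes $\lVert x\rVert$. So the argument ``$\mathrm{d}_{\B^n}$ depends only on Euclidean norms'' does not apply, and the statement is false for Euclidean translations taken literally. I would therefore state explicitly that, inside the ball, ``translation'' must mean the hyperbolic (M\"obius) translation taking $0$ to $a\in\B^n$:
\[
T_a(x)=\frac{(1+2\langle a,x\rangle+\lVert x\rVert^2)\,a+(1-\lVert a\rVert^2)\,x}{D_a(x)}, \qquad D_a(x)=1+2\langle a,x\rangle+\lVert a\rVert^2\lVert x\rVert^2.
\]
For this map I would prove two algebraic identities by direct expansion:
\[
1-\lVert T_a x\rVert^2=\frac{(1-\lVert a\rVert^2)(1-\lVert x\rVert^2)}{D_a(x)}, \qquad \lVert T_a x-T_a y\rVert^2=\frac{(1-\lVert a\rVert^2)^2\lVert x-y\rVert^2}{D_a(x)D_a(y)}.
\]

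The first identity shows that $T_a$ preserves $\B^n$, since $D_a(x)>0$ for $\lVert a\rVert,\lVert x\rVert<1$. Substituting both identities into $\delta$, the factors $(1-\lVert a\rVert^2)^2$ and $D_a(x)D_a(y)$ cancel, which gives $\delta(T_a x,T_a y)=\delta(x,y)$.

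The expansion is the only heavy computation. To keep it manageable, I would first reduce to $a=s e_1$ using the $\mathrm{O}(n)$-invariance already established, via $T_{Aa}=A\,T_a\,A^{-1}$. An alternative route is to check that $T_a$ is a composition of two inversions in spheres orthogonal to $\partial\B^n$. One then uses the standard identity $\lVert \sigma x-\sigma y\rVert = r^2\lVert x-y\rVert/(\lVert x-c\rVert\,\lVert y-c\rVert)$ for an inversion $\sigma$ with centre $c$ and radius $r$, applied once per inversion.

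\textbf{Conclusion.} Combining the linear part with the translation part, $\delta$ and hence $\mathrm{d}_{\B^n}$ are invariant under the group generated by $\mathrm{O}(n)$ and the M\"obius translations $T_a$. This is the sense of ``Euclidean transformations'' in which Proposition~\ref{inv_dbn} holds. It is also consistent with Proposition~\ref{inv_geod}, since these maps are isometries of $(\B^n,\mathbf{g})$.
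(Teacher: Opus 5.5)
Your treatment of rotations, reflections and permutations is the paper's argument in a more general form. The paper's appendix proof checks one explicit anti-rotation matrix and one $3$-cycle in $\R^3$, observes that $\lVert x\rVert$ and $\lVert x-y\rVert$ are unchanged, declares $n=2$ trivial and says general $n$ ``follows the same way''. Your remark that all of these maps are some $A\in\mathrm{O}(n)$ settles every case in every dimension in one line. State that sentence for maps fixing the origin only. A rotation about another centre, or a reflection in an affine hyperplane missing $0$, has the form $x\mapsto Ax+b$ with $b\neq 0$. It sends $\B^n$ onto the unit ball centred at $b$, so it falls on the same side of your dichotomy as translations.

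The genuine difference is translations, and your diagnosis is correct. The paper's proof simply omits them. Its main-text justification (``$\mathrm{d}_{\B^n}$ depends only on the Euclidean norm'') conflates $\lVert x-y\rVert$, which is shift-invariant, with the factors $1-\lVert x\rVert^2$ and $1-\lVert y\rVert^2$, which are not. Concretely, $x=0$ and $y=\tfrac12 e_1$ give $\delta(x,y)=\tfrac13$. After shifting both by $\tfrac25 e_1$ (they stay in the ball) one gets $\delta=\tfrac{625}{399}$. So the proposition read literally is false, and what the paper actually proves is only $\mathrm{O}(n)$-invariance.

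Your M\"obius argument is sound: both identities are correct, and $D_a(x)\ge(1-\lVert a\rVert\lVert x\rVert)^2>0$. But it proves a replacement statement: invariance under the group generated by $\mathrm{O}(n)$ and the $T_a$. That is the full isometry group of $(\B^n,\mathbf{g})$, because every isometry $\psi$ equals $T_{\psi(0)}\circ A$ with $A\in\mathrm{O}(n)$. This group is not $E(n)$ and contains no nontrivial Euclidean translation. So present your result openly as a corrected proposition, not as a proof of the one stated. The trade-off is that the paper's route is a two-line norm check that proves strictly less than it claims. Yours costs one algebraic identity and proves exactly what is true.
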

 \begin{proof}
 See Appendix~\ref{apB}.
 \end{proof}
The next result is the embedding of $\R^n$ into $\B^n$. To prove it, let us consider the mapping $\Phi$ defined from $\R^n$ to $\B^{n}$ by:
\begin{equation*}
\begin{array}{rl}
    \Phi: & \R^n \longrightarrow \B^{n};~x \mapsto \dfrac{x}{\sqrt{1+\Vert x\Vert^{2}}},
\end{array}
\end{equation*}
where $\Vert \cdot \Vert$ denotes the Euclidean norm in $\R^n$. $\Phi$ is well-defined because, $\forall x \in \R^n$, we have:
\begin{equation*}
\Vert \Phi(x) \Vert^2 = \dfrac{\Vert x \Vert^2}{1+\Vert x \Vert^2} < 1.
\end{equation*}

\begin{proposition}
\label{map}
$\Phi$ is an embedding of $\R^n$ into $\B^{n}$.
\end{proposition}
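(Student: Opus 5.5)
We show that $\Phi$ is a smooth injective immersion that is a homeomorphism onto its image. In fact $\Phi$ is a diffeomorphism of $\R^n$ onto all of $\B^n$, which gives the embedding property at once. The plan is to write down an explicit inverse and check smoothness on both sides.

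\textbf{Step 1 (smoothness and explicit inverse).} Each component of $\Phi$ is $x_i(1+\Vert x\Vert^2)^{-1/2}$, which is $C^\infty$ on $\R^n$ because $1+\Vert x\Vert^2>0$. Set $y=\Phi(x)$. Then $\Vert y\Vert^2=\Vert x\Vert^2/(1+\Vert x\Vert^2)$, so $1-\Vert y\Vert^2 = 1/(1+\Vert x\Vert^2)$. Hence $x = y/\sqrt{1-\Vert y\Vert^2}$. Define
\begin{equation*}
\Psi:\B^n\to\R^n,\quad y\mapsto \dfrac{y}{\sqrt{1-\Vert y\Vert^2}},
\end{equation*}
which is $C^\infty$ on $\B^n$ since $1-\Vert y\Vert^2>0$ there. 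A direct substitution will verify $\Psi\circ\Phi=\mathrm{id}_{\R^n}$ and $\Phi\circ\Psi=\mathrm{id}_{\B^n}$. For the second identity, note $1+\Vert\Psi(y)\Vert^2 = 1/(1-\Vert y\Vert^2)$.

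\textbf{Step 2 (conclusion).} By Step 1, $\Phi$ is a bijection $\R^n\to\B^n$ with smooth inverse, i.e.\ a diffeomorphism. Consequently $\Phi$ is injective and a homeomorphism onto its image $\B^n$, with continuous inverse $\Psi$. By the chain rule, $D\Psi_{\Phi(x)}\circ D\Phi_x=\mathrm{Id}$, so $D\Phi_x$ is invertible and $\Phi$ is an immersion. These three properties are exactly what it means for $\Phi$ to be an embedding. Since $\B^n$ is an open subset of $\R^n$, its smooth structure is the one inherited from $\R^n$ regardless of the metric $\mathbf{g}$; the metric plays no role in the embedding property.

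As a cross-check on the immersion property, one can compute $D\Phi_x = (1+\Vert x\Vert^2)^{-1/2}\big(I - \frac{x x^\top}{1+\Vert x\Vert^2}\big)$. By the Sherman–Morrison formula the bracket has eigenvalues $1$ and $1/(1+\Vert x\Vert^2)$, both positive, so $D\Phi_x$ is invertible. Finally, $\Phi(Rx)=R\Phi(x)$ for orthogonal $R$. This remark links the result to Proposition~\ref{inv_dbn} on invariance of $\mathrm{d}_{\B^n}$.

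\textbf{Main obstacle.} Surjectivity onto $\B^n$ is the only delicate point, since it is what makes the inverse globally defined and smooth. It is settled by the explicit formula for $\Psi$, so I expect no real difficulty. The remaining work is routine algebraic verification.
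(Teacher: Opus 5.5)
Your proof is correct, but it takes a different route from the paper's. The paper never uses an inverse map. It proves injectivity by hand (passing through $\Vert x\Vert=\Vert y\Vert$), computes $J_{\Phi}(x)=(1+\Vert x\Vert^2)^{-1/2}\bigl(I-\frac{x\otimes x}{1+\Vert x\Vert^2}\bigr)$, and argues that this matrix is invertible. It then concludes that $\Phi$ is a diffeomorphism onto its image, a step that tacitly uses the inverse function theorem (an injective local diffeomorphism is a diffeomorphism onto its open image).

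You instead exhibit the two-sided smooth inverse $\Psi$, which is exactly the map $S$ the paper writes down right after the proposition but does not use in the proof. This gives bijectivity onto all of $\B^n$, smoothness of the inverse, and the immersion property in one stroke, uniformly in $n$. It needs neither the inverse function theorem nor a determinant.

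That matters here. The paper computes the determinant only for $n=3$ and states $\det J_\Phi(x)=1-\Vert x\Vert^2/\sqrt{1+\Vert x\Vert^2}$, which is wrong; that expression even vanishes at $\Vert x\Vert^2=(1+\sqrt{5})/2$. Your eigenvalue check implies the correct value $(1+\Vert x\Vert^2)^{-(n+2)/2}>0$.

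The paper's template has the general merit of not requiring you to guess an inverse. Here the inverse is immediate, so your argument is cleaner and proves the stronger fact that $\Phi:\R^n\to\B^n$ is a diffeomorphism.

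One small mislabel: the eigenvalues $1$ on $x^\perp$ and $1/(1+\Vert x\Vert^2)$ on $\R x$ come from testing the matrix on those subspaces. Sherman--Morrison instead gives its inverse, $I+xx^\top$.
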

 \begin{proof}
 See Appendix~\ref{apC}.
 \end{proof}
The inverse mapping $S$ (or inverse stereographic projection) is defined by:
\begin{equation*}   
\begin{array}{rl}
    S: & \B^n \to \R^{n} \\
       & x \mapsto \dfrac{x}{\sqrt{1-\Vert x\Vert^{2}}}
\end{array}
\end{equation*}

which is well-defined as long as $\Vert x \Vert < 1$, a condition always satisfied within $\B^n$.

%-----------------------------------------------------------
\subsection{GMCUnet Network Architecture} \label{sec:archigmcunet}%------------------------
%-----------------------------------------------------------

Our architecture (Fig.~\ref{fig:archi}(a)) is based on a classical U-Net structure, commonly used for noise prediction in DDPM \cite{ho2020denoising}. We nonetheless incorporate the previously defined PDE layer into this architecture. In our numerical experiments, we consider $\gM = \B^n$, which enables the construction of a new network that is equivariant with respect to the group $E(n)$.

A diffusion U-Net typically consists of three main components: an encoder (\textit{Downsampling}), a decoder (\textit{Upsampling}), and a middle block (\textit{Middle Block}). The latter, located between the encoder and the decoder, corresponds to the lowest spatial resolution and the highest number of channels. It plays a crucial role in merging, transforming, and refining the representations extracted by the encoder before their reconstruction by the decoder. It often consists of residual blocks for deeper feature extraction and may include an attention module to capture long-range dependencies. In DEGMC, we propose a novel modification in the middle block. Specifically, we replace the classical ResNetBlocks, commonly used in standard U-Nets, with \textit{ResnetCDEBlocks}. As illustrated in Fig.~\ref{fig:archi}(b), these blocks enhance predictive capacity while introducing explicit equivariance with respect to the group $E(n)$.

To the best of our knowledge, the combination of a diffusion U-Net and residual blocks of the CDE type (Convection, Dilation, and Erosion) has not yet been reported in the literature. Therefore, the proposed GMCUnet architecture constitutes an original contribution aimed at enhancing both the robustness and expressiveness of diffusion models. Its use in the reverse diffusion process is described in Algorithm~\ref{alg:reverse}.
%%----------------------------algorithm---------------------------%
\begin{algorithm}[tb]
\caption{Reverse Diffusion Process of DEGMC}
\label{alg:reverse}
\begin{algorithmic}

\STATE {\bfseries Input:} initial noise $n_T \sim \mathcal{N}(0, I)$
\STATE {\bfseries Output:} generated sample $n_0$

\FOR{$t = T$ {\bfseries down to} $1$}

    \STATE Compute time embedding $e_t \leftarrow \mathrm{MLP}(t)$
    
    \STATE Apply the GMCUnet network:
    \STATE \quad Encoder :
    \STATE \quad\quad $h_t \leftarrow \mathrm{Downsampling}(n_t, e_t)$
    \STATE \quad Middle Block:
    \STATE \quad\quad $h_t \leftarrow \mathrm{ResnetCDEBlocks}(h_t)$
    \STATE \quad\quad $h_t \leftarrow \mathrm{Attention}(h_t)$
    \STATE \quad\quad $h_t \leftarrow \mathrm{ResnetCDEBlocks}(h_t)$ \hfill {\footnotesize (×2)}
    \STATE \quad Decoder:
    \STATE \quad\quad $\hat{\epsilon}_t \leftarrow \mathrm{Upsampling}(h_t, e_t)$
    
    \STATE Compute the mean of the reverse process:
    \STATE \quad $\mu_t =
        \frac{1}{\sqrt{\alpha_t}}
        \left(
        n_t -
        \frac{1 - \alpha_t}{\sqrt{1 - \bar{\alpha}_t}}
        \hat{\epsilon}_t
        \right)$
    
    \STATE Sample the previous state:
    \STATE \quad $n_{t-1} \sim \mathcal{N}(\mu_t, \beta_t I)$

\ENDFOR

\STATE {\bfseries Return} $n_0$

\end{algorithmic}
\end{algorithm}
%----------------------------algorithm---------------------------%

%\begin{figure}[h]
    %\centering
    %\begin{subfigure}{0.45\textwidth}
        %\centering
        %\includegraphics[width=4cm]{arch1.png}
        %\caption{Architecture of the PDEUnet network}
   % \end{subfigure}%
    %\hfill
    %\begin{subfigure}{0.45\textwidth}
        %\centering
        %\includegraphics[width=6cm]{arch2.png}
        %\caption{Detail of the ResnetCDEBlock}
        %\label{fig:arch2}
    %\end{subfigure}
    %\caption{Architecture of the equivariant PDE-G-CNN network for prediction, called PDEUnet, along with the detail of the ResnetCDE block using the three PDEs: convection, dilation, and erosion.}
    %\label{fig:archipdeunet}
%\end{figure}
%%%%%%%%%%%%%%%%%%%%%%%fin de proposition%%%%%%%%%%%%%%%%%%%%% 

%\section{Related Work}\label{sec:relatework}

%\subsection{Diffusion model}
%\subsection{Equivariance}
%%%%%%%%%%%%%%%%%%%%%%%%%%%%%
\section{Experiments}%%%%%%%%
%%%%%%%%%%%%%%%%%%%%%%%%%%%%%

To evaluate the performance of our diffusion model, numerical experiments were conducted on the \textbf{MNIST}, \textbf{Rotated MNIST}, and \textbf{CIFAR-10} datasets. For MNIST and Rotated MNIST, the model is trained using a batch size of 64 images for a total of $60,000$ training iterations. For CIFAR-10, the training is performed with a batch size of 128 images over $80,000$ iterations. In all cases, optimization relies on the Adam algorithm ($\beta_1 = 0.9, \beta_2 = 0.99$) with a fixed learning rate of $1 \times 10^{-4}$.
To stabilize training and improve the quality of the generated images, an exponential moving average (EMA) of the model parameters is applied every $10$ iterations with a decay factor of $0.995$.

Test image generation is periodically performed to monitor the evolution of the quality of the generated samples. For MNIST and Rotated MNIST, the sampling is carried out every $100 $ iterations, with $25$ images generated at each step. For CIFAR-10, images are generated every $1,000$ iterations, with $64$ images produced at each step.

A quantitative evaluation is performed using the Fréchet Inception Distance (FID) metric. FID scores are computed using features extracted from an Inception network of dimension $2048$ on $2,500$ generated images for MNIST and Rotated MNIST and on $5,000$ generated images for CIFAR-10. Additional metrics are considered to provide a more detailed evaluation of model performance.

In this study, we choose to compare our approach exclusively with the baseline DDPM model. There are two main reasons for this choice. First, the DDPM model is foundational for diffusion-based approaches and is a standard reference in much of the existing literature. Therefore, it provides a relevant baseline for objectively assessing the improvements introduced by our method. Second, rather than attempting to outperform all existing diffusion model variants, our objective is to quantify the impact of introducing equivariance through group morphological convolutions on the original model's performance. By limiting the comparison to this reference model, we can isolate and rigorously analyze the true contribution of our approach.

The following presents the results in the form of tables and figures, along with image samples generated for each dataset and model.
%----------------figure des sample mnist-------------------%
\begin{figure}[!ht]
    %\vskip 0.2in
    \centering
    \begin{subfigure}{0.48\columnwidth}
        \centering
        \includegraphics[width=\linewidth]{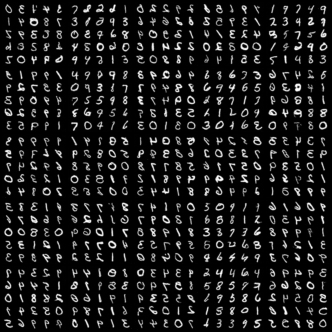}
        \caption{DEGMC (FID = 30.94)}
        \label{fig:ed2rm}
    \end{subfigure}
    \hfill
    \begin{subfigure}{0.48\columnwidth}
        \centering
        \includegraphics[width=\linewidth]{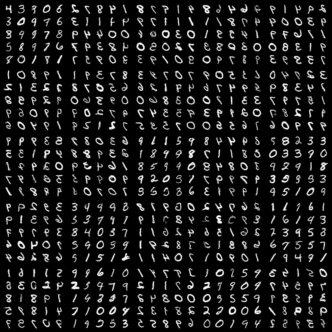}
        \caption{DDPM (FID = 36.41)}
        \label{fig:ddpm}
    \end{subfigure}
    \caption{Best samples generated on MNIST based on the lowest FID scores: DEGMC vs. DDPM.}
    \label{fig:samplemnist}
\end{figure}
%------------------------------------------------------------%
Fig.~\ref{fig:samplemnist} presents the samples obtained at the training iterations corresponding to the best FID scores using DEGMC (Fig.~\ref{fig:ed2rm}) and DDPM (Fig.~\ref{fig:ddpm}). A visual inspection reveals that the overall quality of the generated samples is comparable with respective FID scores of $30.94$ and $36.41$. These results suggest that the DEGMC model has slightly superior generative performance on the MNIST dataset.\\
Table~\ref{tab:metric} reports obtained quantitative results using FID and mean Inception Score (IS) metrics for the generated samples. The results for DEGMC and DDPM show comparable performance in terms of sample quality and diversity on MNIST. Specifically, DEGMC achieves an average FID of $45.14$, which is slightly lower than DDPM's FID average of $46.91$. Furthermore, Fig.~\ref{fig:fidmnist} shows the evolution of the FID during training. It highlights that DEGMC generates higher-quality samples during the first thirty training iterations. DDPM subsequently reaches a comparable level of quality after the thirtieth iteration. These results suggest that DEGMC converges more rapidly toward high-quality sample generation due to its equivariance property, whereas DDPM requires a larger number of iterations to reach similar performance.

%--------- ------tableau des metrics ----------------------%
\begin{table}[t]
\caption{Mean FID and IS scores on MNIST, RotoMNIST, and CIFAR-10.}
\label{tab:metric}
\centering
\small
\setlength{\tabcolsep}{6pt}
\renewcommand{\arraystretch}{1.1}

\begin{tabular}{lcccc}
\toprule
Base &
\multicolumn{2}{c}{DEGMC} &
\multicolumn{2}{c}{DDPM} \\
\cmidrule(lr){2-3} \cmidrule(lr){4-5}
& FID & IS & FID & IS \\
\midrule
MNIST 
& 45.14 & $1.21 \pm 0.24$ 
& 46.91 & $1.21 \pm 0.24$ \\

RotoMNIST 
& 49.30 & $1.33 \pm 0.18$ 
& 54.16 & $1.30 \pm 0.14$ \\

CIFAR-10 
& 64.83 & $1.50 \pm 0.24$ 
& 65.51 & $1.39 \pm 0.12$ \\
\bottomrule
\end{tabular}
\vskip -0.1in
\end{table}

%------------------------------------------------------------%
%----------------figure des sample roto mnist-------------------%
\begin{figure}[!ht]
    %\vskip 0.2in
    \centering
    \begin{subfigure}{0.48\columnwidth}
        \centering
        \includegraphics[width=\linewidth]{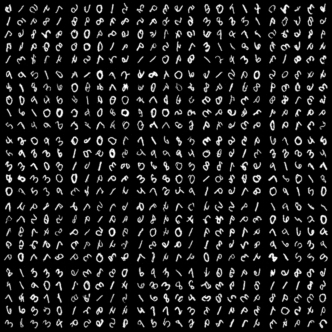}
        \caption{DEGMC (FID = 35.75)}
        \label{fig:ed2rmroto}
    \end{subfigure}
    \hfill
    \begin{subfigure}{0.48\columnwidth}
        \centering
        \includegraphics[width=\linewidth]{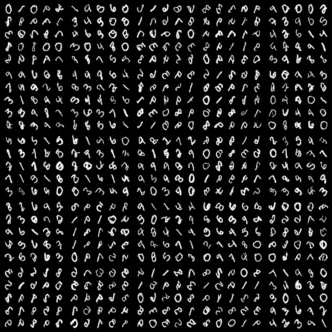}
        \caption{DDPM (FID = 44.74)}
        \label{fig:ddpmroto}
    \end{subfigure}
    \caption{Best samples generated on RotoMNIST based on the lowest FID scores: DEGMC vs. DDPM.}
    \label{fig:sampleroto}
\end{figure}
%------------------------------------------------------------%
On the RotoMNIST dataset, the samples generated by DEGMC (Fig.~\ref{fig:ed2rmroto}) exhibit superior quality, achieving an FID score of $35.75$. In contrast, DDPM (Fig.~\ref{fig:ddpmroto}) obtains a higher FID score of $44.74$. Table~\ref{tab:metric} shows that DEGMC achieves better FID and IS scores, demonstrating its ability to generate higher-quality and more diverse samples than DDPM.

%----------------figure des sample cifar-------------------%
\begin{figure}[!ht]
    %\vskip 0.2in
    \centering
    \begin{subfigure}{0.8\columnwidth}
        \centering
        \includegraphics[width=0.8\linewidth]{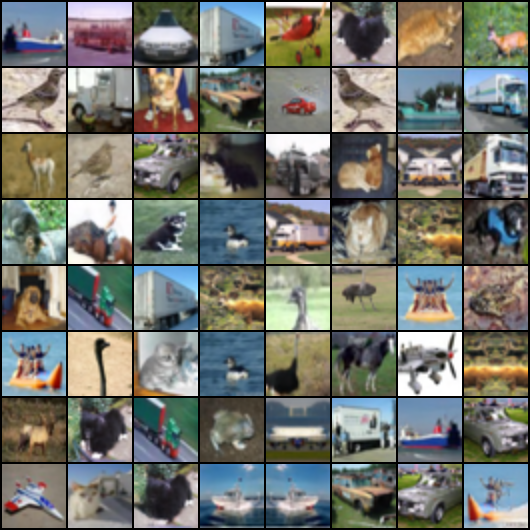} % \linewidth
        \caption{DEGMC (FID = 25.14)}
        \label{fig:ed2rcifar}
    \end{subfigure}\\
    %\hfill
    \begin{subfigure}{0.8\columnwidth}
        \centering
        \includegraphics[width=0.8\linewidth]{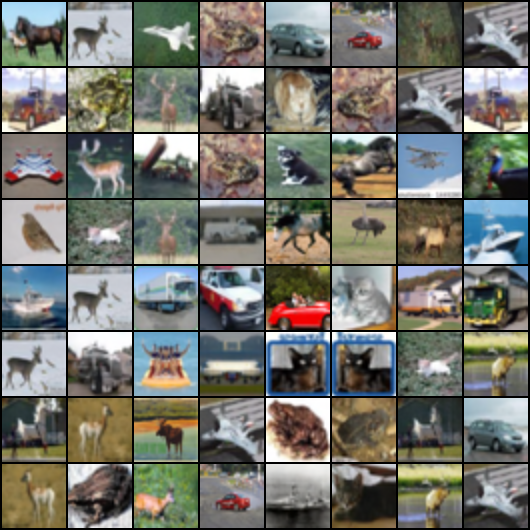}
        \caption{DDPM (FID = 25.59)}
        \label{fig:ddpmcifar}
    \end{subfigure}
    \caption{Best samples generated on CIFAR-10 by DEGMC and DDPM during training, selected based on the lowest FID score.}
    \label{fig:samplecifar}
\end{figure}
%------------------------------------------------------------%
Furthermore, Fig.~\ref{fig:fidroto} illustrates the evolution of the FID over training iterations on the RotoMNIST dataset and highlights a significant performance gap between the two models for most of the training process. We observe that DEGMC effectively adapts to the RotoMNIST data, maintaining a performance level comparable to that obtained on MNIST. This clearly demonstrates the impact of equivariance. This property enables the model to perform similarly on untransformed data and data subjected to group transformations. In contrast, DDPM fails to maintain equivalent performance when trained for the same number of iterations.

Our objective on the CIFAR-10 dataset is to evaluate the performance of our model on real-world color images. Fig.~\ref{fig:samplecifar} presents the best samples generated during training for the two models being compared. Our DEGMC model achieves a slightly lower FID value ($25.14$) than the DDPM baseline model ($25.59$), indicating an improvement in image fidelity and overall visual quality. Table~\ref{tab:metric} summarizes the mean FID and Inception Score (IS) values and reports an average FID of 64.83 for DEGMC, compared to 65.51 for DDPM. These results confirm an overall improvement in generative performance relative to the DDPM model's samples. Furthermore, the higher IS value obtained by our model indicates greater diversity in the generated images and better coverage of the real data distribution.

In conclusion, both models perform similarly on the MNIST dataset, although DDPM requires more training iterations to reach similar performance levels. However, on the Rotated MNIST dataset, DDPM fails to maintain the performance observed on MNIST. In contrast, DEGMC preserves nearly identical generation quality across both datasets. We also observe high-quality image generation on the CIFAR-10 dataset, demonstrating that our model can effectively generalize to real-world color images. This stability in performance is due to DEGMC's intrinsic equivariance with respect to group transformations. This highlights the importance of equivariance as a key property for achieving robust and consistent generation across diverse data types.

%----------------figure des courbes FID-------------------%
\begin{figure}[!ht]
    %\vskip 0.2in
    \centering
    \begin{subfigure}{1\columnwidth}
        \centering
        \includegraphics[width=\linewidth]{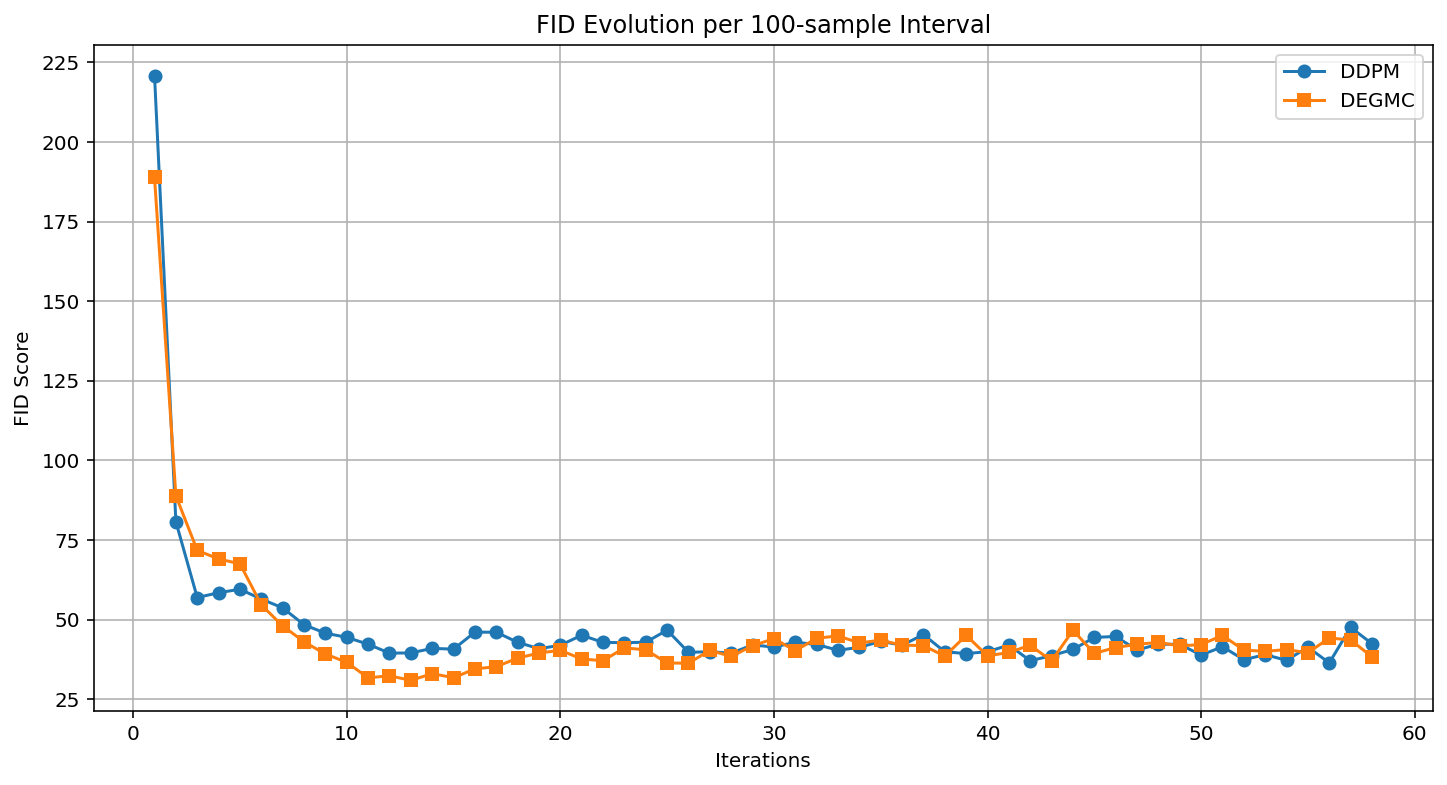}
        \caption{FID on MNIST}
        \label{fig:fidmnist}
    \end{subfigure}\\
    %\hfill
    \begin{subfigure}{1\columnwidth}
        \centering
        \includegraphics[width=\linewidth]{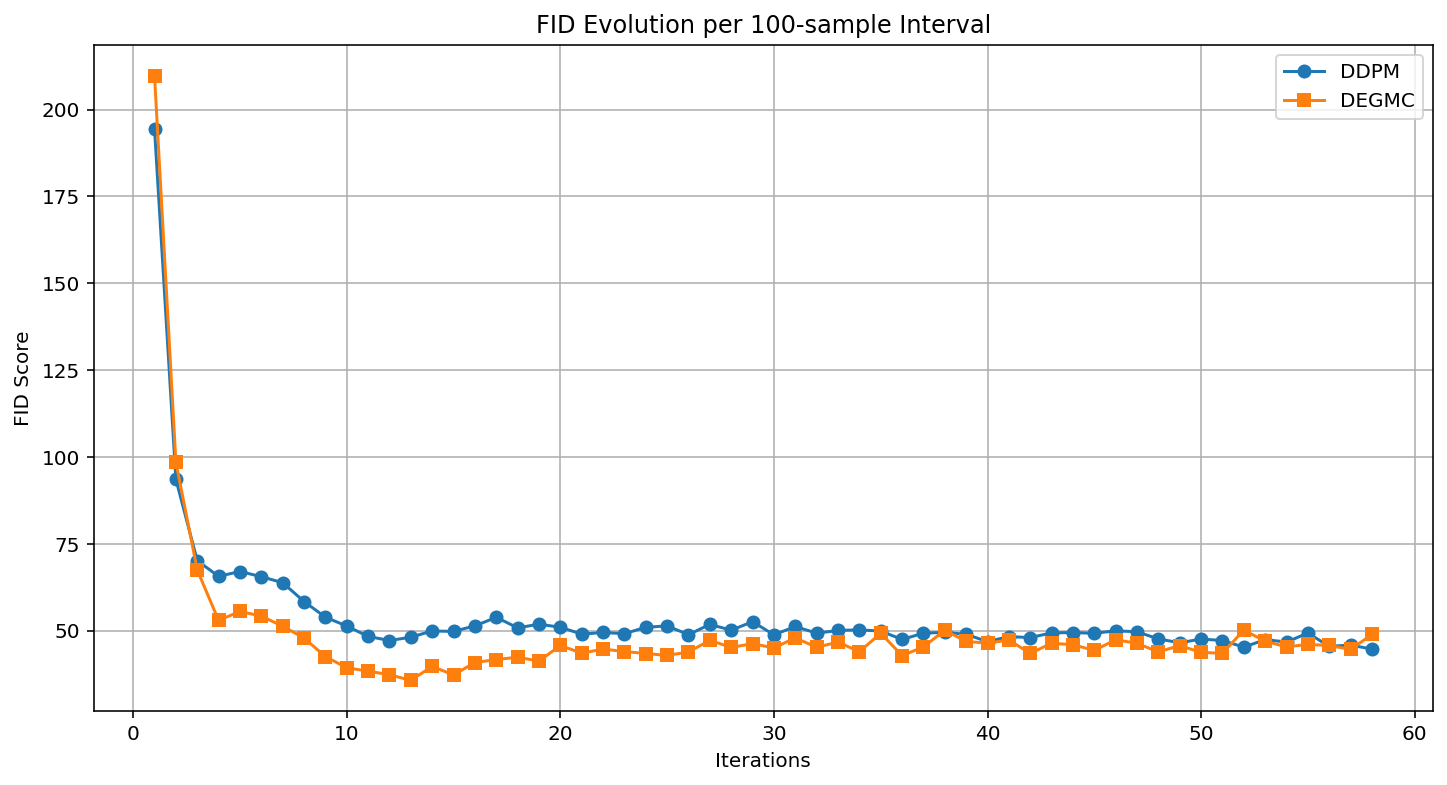}
        \caption{FID on RotoMNIST}
        \label{fig:fidroto}
    \end{subfigure}
    \caption{FID evolution during training using DEGMC and DDPM on MNIST and RotoMNIST.}
    \label{fig:fidcurves}
\end{figure}

%------------------------------------------------------------%

%%%%%%%%%%%%%%%%%%%%%%%%%%%%%
\section{Conclusion}%%%%%%%%
%%%%%%%%%%%%%%%%%%%%%%%%%%%%%
We have proposed here an equivariant denoising diffusion model that integrates equivariant group morphological convolutions on Riemannian manifolds. Not only does the framework preserve the key data symmetries—translations, rotations, reflections, and permutations, but it also enhances geometric feature extraction in the denoising process. Our experiments on the MNIST, RotoMNIST, and CIFAR-10 datasets confirm that the proposed DEGMC achieves faster convergence, superior FID scores, and improved robustness compared to the standard DDPM, particularly under geometric transformations. These findings highlight the potential of introducing equivariant group morphological convolution architectures in diffusion models to produce more interpretable and resilient generative frameworks. Future work will explore scaling DEGMC to higher-dimensional datasets and extending its applicability to $3$D shape generation and molecular modeling.

\appendix

%------------------------------------------------------------------------------
\section{Details on probabilistic diffusion models}\label{annexe:diffusion}%-----------------------------
%-----------------------------------------------------------------
\begin{figure}[!t]
%\vskip 0.2in
\centering
\includegraphics[width=0.5\columnwidth]{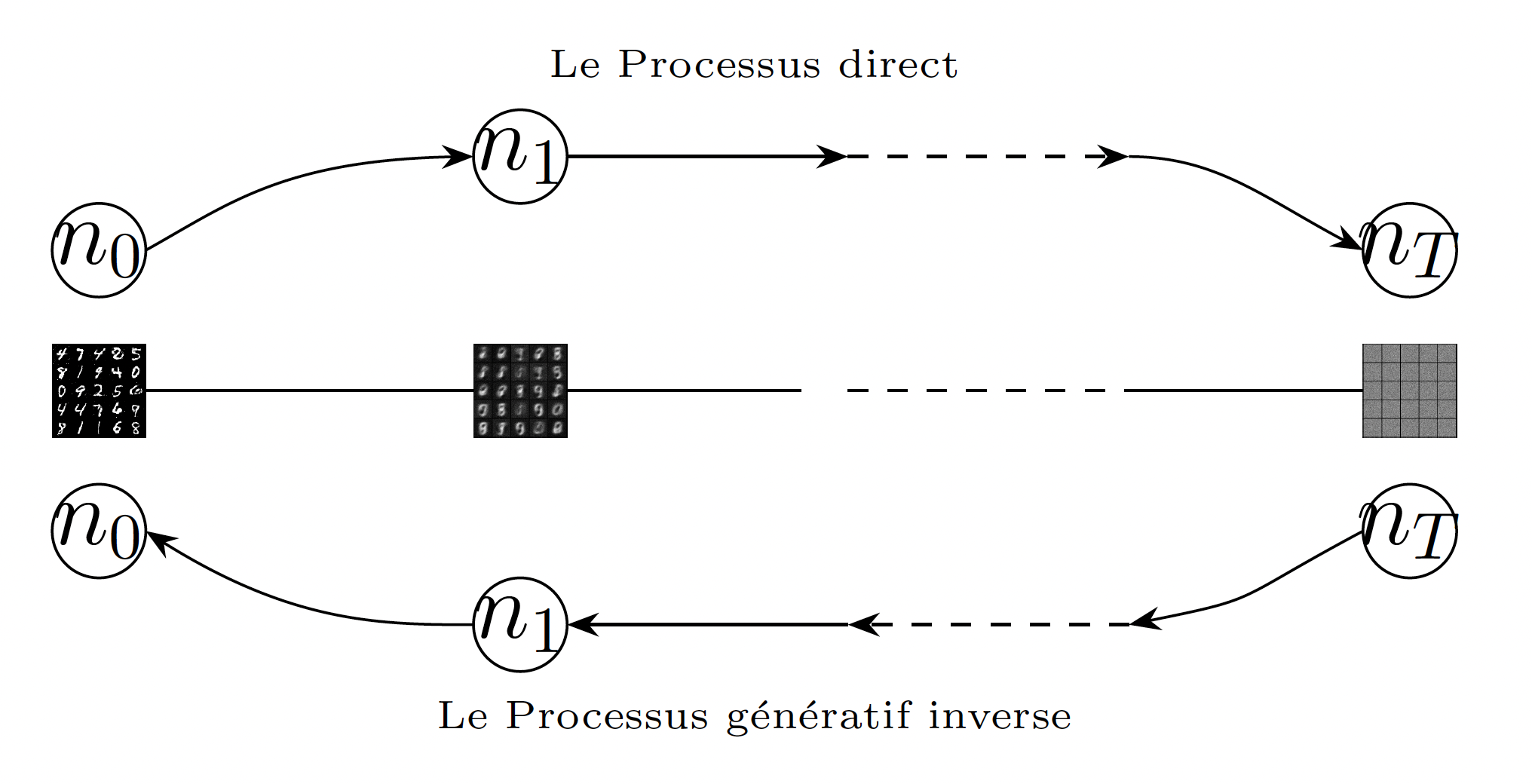}
\caption{Illustration of the forward noising process and the inverse denoising process.}
\label{fig:processus}
\end{figure}

Given observations $x\sim q(x)$, the model operates on latent variables $n_0,\dots,n_T$ of the same dimension as $x$, where $n_0$ corresponds to the observation $x$ and $n_T$ represents standard Gaussian noise (see Fig.~\ref{fig:processus}).

\paragraph{Forward Process.}

The forward process is Markovian; thus, for all $t \in \{0, \dots, T\}$, $n_t$ depends only on $n_{t-1}$ and not on earlier variables \cite{Ghojogh2019}:
\begin{equation}
     q(n_t \mid n_{t-1}, n_{t-2}, \dots, n_0) = q(n_t \mid n_{t-1}) \label{eq:eq11}
\end{equation}
Hence, the joint distribution of this process can be written as:
\begin{equation}
    q(n_1, n_2, \dots, n_T \mid n_0) = \prod_{t = 1}^{T} q(n_t \mid n_{t-1})
\end{equation}
For any $t > s$, the transition distribution from step $s$ to $t$ can be defined using the Gaussian reparameterization of \cite{Kingma2013}, considering a standard Gaussian $\varepsilon \sim \mathcal{N}(0, I)$. Thus, Equation (\ref{eq:eq10}) can be rewritten as:
\begin{equation}
    n_t = \sqrt{\alpha_t} n_{t-1} + \sqrt{1 - \alpha_t} \, \varepsilon \label{eq:reparemetrisation}
\end{equation}
Consequently, for any $t > s$, we have:
\begin{equation}
    q(n_t \mid n_s) = \mathcal{N}(n_t : \sqrt{\alpha_{t/s}} n_s, \Gamma_{t/s} I)
\end{equation}
with $\alpha_{t/s} = \prod_{i = s+1}^{t} \alpha_i$ and $\Gamma_{t/s} = 1 - \alpha_{t/s}$. Relative to the initial data $n_0$:
\begin{equation}
    q(n_t \mid n_0) = \mathcal{N}(n_t : \sqrt{\bar{\alpha}} n_0, \bar{\Gamma} I)
\end{equation}
where $\bar{\alpha} = \prod_{i = 1}^{t} \alpha_i$ and $\bar{\Gamma} = 1 - \bar{\alpha}$.

\paragraph{Inverse Generative Process.} 
\begin{figure}[!t]
%\vskip 0.2in
\centering
\includegraphics[width=\columnwidth]{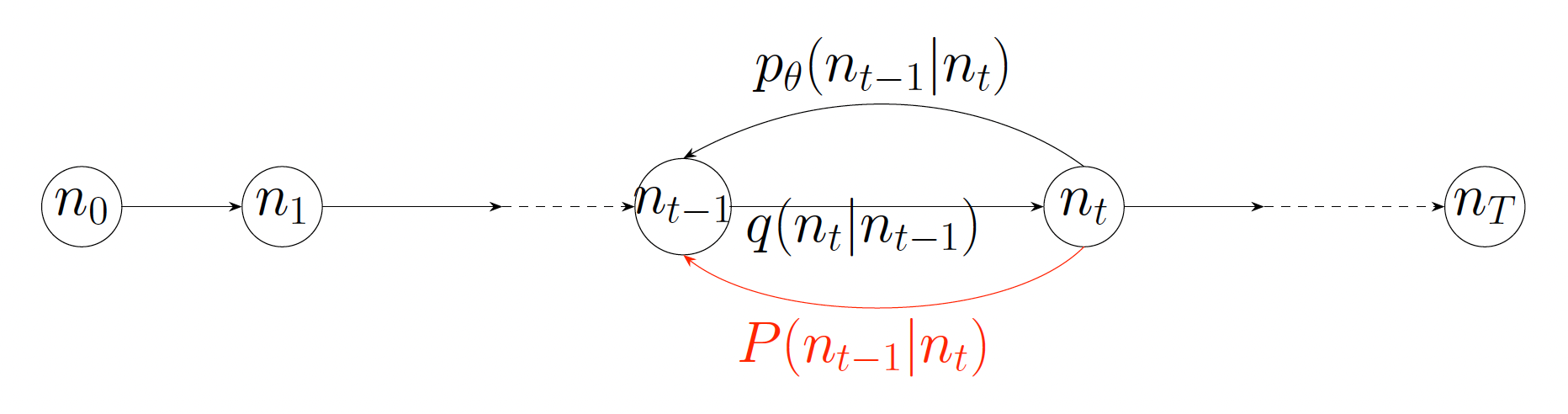}
\caption{Conditional distributions in the forward and generative processes of the diffusion model.}
\label{fig:distribution}
\end{figure}

The generative process is also modeled as a first-order Markov chain, i.e.,
\begin{equation}
    P(n_{t-1} \mid n_t, n_{t+1}, \dots, n_T) = P(n_{t-1} \mid n_t)
\end{equation}
Thus, the joint distribution of the generative process can be written as:
\begin{equation}
    p_{\theta}(n_{0:T}) = P(n_T) \prod_{t=1}^{T} p_{\theta}(n_{t-1} \mid n_t)
\end{equation}
with $P(n_T)$ typically defined as standard Gaussian noise:
\begin{equation}
    P(n_T) = \mathcal{N}(0, I)
\end{equation}

The true distribution is similarly defined as in Equation (\ref{eq:eq10}):
\begin{equation}
    P(n_{t-1} \mid n_t, n_0) = \mathcal{N}(n_{t-1} : \tilde{\mu}(n_t, n_0), \tilde{\sigma}^2 I) \label{eq:dreel}
\end{equation}
Using the same reparameterization technique as in the forward case (see Equation (\ref{eq:reparemetrisation})), we can sample from a standard Gaussian $\varepsilon \sim \mathcal{N}(0, I)$. Then, the mean in Equation (\ref{eq:dreel}) can be expressed as:
\begin{equation}
    \tilde{\mu}(n_t, n_0) = \frac{1}{\sqrt{\alpha_t}} \left(n_t - \frac{\Gamma_t}{\sqrt{1 - \bar{\alpha}_t}} \varepsilon \right)
\end{equation}
Similarly, an expression for $\mu_{\theta}(n_t, t)$ in Equation (\ref{eq:dgeneratif}) is given. Since the learned denoising process is defined from the true denoising process, we have:
\begin{equation}
    \mu_{\theta}(n_t, t) = \frac{1}{\sqrt{\alpha_t}} \left(n_t - \frac{\Gamma_t}{\sqrt{1 - \bar{\alpha}_t}} \varepsilon_{\theta}(n_t, t) \right)
\end{equation}
where $\varepsilon_{\theta}(n_t, t) = \phi(n_t, t)$ is the output of the neural network $\phi$ at iteration $t$.

\paragraph{Variational Lower Bound of the Likelihood.} \label{annexe:elbo}
As mentioned earlier, diffusion models introduce a sequence of latent variables. The data likelihood is written as:
\begin{align*}
 p_{\theta}\left(n_{0}\right)=\int p_{\theta}\left(n_{0}, n_{1: T}\right) \, d n_{1: T}. 
\end{align*}

Direct maximization of this likelihood is intractable; therefore, diffusion models optimize a variational lower bound (ELBO) on the data likelihood:
\begin{align}
\mathcal{L} := \; &\E_{q\left(n_{1: T} \mid n_{0}\right)}\!\left[\log p_{\theta}\left(n_{0} \mid n_{1: T}\right)\right] \notag \\ &- \operatorname{KL}\!\left(q\left(n_{1: T} \mid n_{0}\right) \,\|\, p_{\theta}\left(n_{1: T}\right)\right) 
\;\; \leq \;\; \log p_{\theta}\left(n_{0}\right),
\end{align}
where $\E[\cdot]$ denotes expectation and $\operatorname{KL}(\cdot \| \cdot)$ is the Kullback–Leibler divergence.  

Expanding, we obtain an equivalent expression (see \cite{ho2020denoising,SohlDickstein2015}):
\begin{align*}
\mathcal{L}(\theta) = -\operatorname{KL}\!\left(q\left(n_{T} \mid n_{0}\right) \,\|\, p\left(n_{T}\right)\right) \\
 - \sum_{t=2}^{T} \operatorname{KL}\!\left(q\left(n_{t-1} \mid n_{t}, n_{0}\right) \,\|\, p_{\theta}\left(n_{t-1} \mid n_{t}\right)\right)\\ 
+ \E_{q\left(n_{1:T} \mid n_{0}\right)}\!\left[\log p_{\theta}\left(n_{0} \mid n_{1}\right)\right].
\end{align*}

The ELBO must be maximized with respect to $\theta$. The first KL divergence is independent of $\theta$ and can therefore be ignored during optimization. Hence, maximizing $\mathcal{L}$ reduces to:
\begin{align}
\underset{\theta}{\operatorname{minimize}} \;\;  \sum_{t=2}^{T} \operatorname{KL}\!\left(q\left(n_{t-1} \mid n_{t}, n_{0}\right) \,\|\, p_{\theta}\left(n_{t-1} \mid n_{t}\right)\right) 
 \notag \\ - \E_{q\left(n_{1: T} \mid n_{0}\right)}\!\left[\log p_{\theta}\left(n_{0} \mid n_{1}\right)\right].
\end{align}

This equation trains the inverse distribution $p_{\theta}\!\left(n_{t-1} \mid n_{t}\right)$ to match the true denoising distribution $q\!\left(n_{t-1} \mid n_{t}, n_{0}\right)$ by minimizing their KL divergence. It can thus be used as a loss function for a neural network parameterized by $\theta$, emphasizing the alignment between these two distributions.  

In other words, optimizing the ELBO forces the model to learn a denoising process capable of reversing the progressive diffusion of noise. Training consists of bringing the learned inverse process $p_{\theta}$ closer to the true denoising process $q$, while maximizing the likelihood of the observed data.

%%%%%%%%%%%%%%%%%%%%%%%%%%%%%%%%%%%%%%%%%%%%%%%%%%%%%%%%%%%%
\section{Background on morphological operators and PDEs}%%%%
%%%%%%%%%%%%%%%%%%%%%%%%%%%%%%%%%%%%%%%%%%%%%%%%%%%%%%%%%%%%
Let $b: \R^2\rightarrow\bar\R$ be a concave function, known also as the structuring function or convolution kernel. Let us consider the subset $\E$ of $\Z^2$ and the function $f:\E \rightarrow \bar\R$. 
\begin{definition}
\label{def:morpho}
Morphological dilation and erosion are respectively defined as:
\begin{align}
& f \oplus b(x) = \sup_{y\in\E}[f(y) + b(x-y)] \label{inf_sup:dil} \\
& f\ominus b(x) = \inf_{y\in\E}[f(y) - b(y-x)]. \label{inf_sup:ero}
\end{align}
\end{definition}
Let $B \subseteq \E$ be a bounded set. A flat structuring function (SF) satisfies $b(x)= 0$ if $x\in B$ and $b(x)=-\infty$ if $x\in B^c$. The flat morphological dilation and erosion respectively write:
\begin{equation}
\label{inf_sup:st}
f\oplus B(x) = \sup_{y\in B}[f(x-y)] \mbox{ and } f\ominus B(x) = \inf_{y\in B}[f(x+y)].
\end{equation}
As for an interpretation, erosion shrinks positive peaks, and peaks thinner than the structuring function disappear. One has the dual effects for morphological flat dilation. Both the morphological dilation and erosion are translation invariant.
\begin{definition}
\label{def:increas}
Let $\mathcal{F}$ be a family of real functions defined on $\Omega\subseteq\R^2$. We say that $T:\mathcal{F}\rightarrow \mathcal{F}$ is said to be increasing (monotone) if and only if it satisfies:\\$\forall~f_1, f_2\in\mathcal{F}$ such that $(f_1 \geq f_2$ on $\Omega)$ implies $(T(f_1) \geq T(f_2)$ on $\Omega)$.
\end{definition}
\begin{proposition}
Morphological dilation and erosion satisfy the following duality and adjunction properties:
\begin{enumerate}
 \item duality: $f\oplus b= - (-f\ominus b)$
 \item adjunction: $(f_1\oplus b \leq f_2$ on $E)$ $\Longleftrightarrow (f_1 \leq f_2\ominus b$ on $E)$.
\end{enumerate}
\end{proposition}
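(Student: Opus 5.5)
The plan is to argue pointwise from Definition~\ref{def:morpho}, using only the elementary identities $\sup_{y}(-a_y) = -\inf_{y} a_y$ and the fact that a supremum is bounded above by $c$ exactly when every term is. No PDE or Riemannian machinery is needed. Before starting, I would fix conventions for the extended reals, since $b$ may take the value $-\infty$ (as for flat structuring functions). I will take $f$, $f_1$, $f_2$ real-valued and read $f(y) - b(\cdot)$ as $+\infty$ whenever $b(\cdot) = -\infty$. Then no expression of the form $\infty - \infty$ ever appears.

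\emph{Duality.} For fixed $x \in \E$, the definition of erosion applied to $-f$ gives $(-f \ominus b)(x) = \inf_{y\in\E}[-f(y) - b(y-x)]$. Negating this and exchanging the infimum for a supremum yields
\begin{equation*}
-(-f\ominus b)(x) = \sup_{y\in\E}[f(y) + b(y-x)].
\end{equation*}
Comparing with $(f\oplus b)(x) = \sup_{y}[f(y)+b(x-y)]$, the two expressions coincide once $b(y-x) = b(x-y)$. This is the step I expect to be the only real obstacle: with the two definitions written as they are, the identity needs $b$ to be symmetric, $b(-z) = b(z)$. That holds for the radial structuring functions $b_t^k$ used in the paper, and for flat $b$ whenever $B = -B$. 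So I would state this symmetry assumption explicitly. For general $b$ I would instead prove the variant $f \oplus b = -(-f \ominus \check b)$, with $\check b(z) = b(-z)$, using the same computation.

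\emph{Adjunction.} The key observation is that both sides are equivalent to one family of scalar inequalities. First, $f_1 \oplus b \le f_2$ on $\E$ means: for every $x \in \E$, $\sup_y [f_1(y) + b(x-y)] \le f_2(x)$. This holds exactly when
\begin{equation*}
f_1(y) + b(x-y) \le f_2(x) \quad \text{for all } x, y \in \E.
\end{equation*}
Rearranging, this is $f_1(y) \le f_2(x) - b(x-y)$ for all $x, y$. The rearrangement is trivially valid where $b(x-y) = -\infty$, because then both forms are automatically true. Now fix $y$ and take the infimum over $x$: the family is equivalent to $f_1(y) \le \inf_{x\in\E}[f_2(x) - b(x-y)]$ for every $y$. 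By the definition of erosion, this infimum is exactly $(f_2 \ominus b)(y)$. Hence the family is equivalent to $f_1 \le f_2 \ominus b$ on $\E$. Every step is an ``if and only if'', so both directions follow at once. Unlike duality, this part needs no symmetry of $b$, because the erosion in Definition~\ref{def:morpho} is precisely the adjoint of the dilation.
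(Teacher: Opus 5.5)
The paper gives no proof of this proposition; it is quoted in the appendix as standard background. So there is no argument of the authors' to compare yours against, and your proposal has to stand on its own. It does.

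\textbf{Adjunction.} Your argument is the standard residuation argument and is complete. Both sides unfold to the same family of scalar inequalities $f_1(y)+b(x-y)\le f_2(x)$, $x,y\in\E$, and no symmetry of $b$ is needed.

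\textbf{Duality.} Your caveat is correct, and it reflects a defect in the statement rather than in your method. Take $e_1=(1,0)$, let $b$ be the concave function equal to $0$ on the segment $[0,e_1]$ and $-\infty$ elsewhere, and let $f=\mathbf{1}_{\{0\}}$ on $\E=\Z^2$. Then $(f\oplus b)(x)=\max\{f(x),f(x-e_1)\}$, whereas $-((-f)\ominus b)(x)=\max\{f(x),f(x+e_1)\}$. At $x=e_1$ these equal $1$ and $0$, so the literal identity fails even for a flat concave $b$.

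This is the familiar phenomenon that, whichever sign convention is used inside the erosion, one of the two properties needs the reflected function $\check b$. Had the erosion been defined as $\inf_y[f(y)-b(x-y)]$, duality would hold for every $b$, but adjunction would then require $\check b$. So your symmetry hypothesis $b(-z)=b(z)$, or the reflected identity $f\oplus b=-((-f)\ominus\check b)$ that your computation proves, is exactly the right repair.

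\textbf{Extended-real values.} The one place where you prove less than the paper's setting asks for is the restriction to real-valued $f$; the paper allows $f:\E\to\bar{\R}$. This restriction is harmless. Use the usual conventions $(+\infty)+(-\infty)=-\infty$ inside the dilation and $(-\infty)-(-\infty)=+\infty$ inside the erosion. Under them, the equivalence $a+c\le d \iff a\le d-c$ remains valid for all $a,d\in\bar{\R}$ and $c\in[-\infty,+\infty)$. Negation also exchanges the two conventions. Hence both of your computations go through unchanged for extended-real $f$, $f_1$, $f_2$.
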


Let $(b_t)_{t\geq 0}$ the family of structuring functions defined by using the SF $b$, as follows:% $b_t(x)=t\,b(x/t)$, and for $t=0$ by $0$ for $x=0$ and $-\infty$ otherwise. 
\begin{equation*}
b_{t}(x) = \left\{
\begin{array}{cl}
  t b(x/t) & \text{for } t>0 \\
  0        & \text{for } t=0,\; x=0 \\
  -\infty  & \text{otherwise}.
\end{array}
\right.
\end{equation*}
The family $(b_t)_{t\geq0}$ satisfies the semi-group property:\\ $\forall~s,t\geq 0$, $(b_{s}\oplus b_{t})(x) =$ $b_{s+t}(x,y)$. 
\begin{definition}
Morphological multiscale dilations and erosions are defined as follows:
\begin{align}
& (f\oplus b_t)(x) = \sup_{y\in\E}[f(y) + b_t(x-y)] \label{scale_inf_sup:dil} \\
& (f\ominus b_t)(x) = \inf_{y\in\E}[f(y) - b_t(y-x)]. \label{scale_inf_sup:ero}
\end{align}
\end{definition}
Considering flat structuring function (SF), morphological multiscale dilations and erosions are obtained equivalently  by considering $B_t = tB$ as multiscale SFs. \\

The link between morphological scale-spaces and PDEs was established by running the following PDE that performs multiscale flat dilations and erosions on a given image $f$ \cite{Meyer2000,Schmidt2016}: 
\begin{equation}
\label{cont}
 \partial_t u \pm\|\nabla u\| = 0;~u(\cdotp,0) = f.
\end{equation}
Depending on the shape of SF, different PDEs can be obtained. For instance, considering the sets\\ $S_p= \left\{x=(x_1,x_2)\in \R^2: |x|_p \leq 1\right\}$, where $|\cdotp|_p$ is the $L^p$ norm, one gets:
\begin{itemize}
\item for a square $S_1$: $\partial_t u \pm\| \nabla u\|_1 = 0;~u(\cdotp,0) = f$ 
\item for a dis $S_2$: $\partial_t u \pm\| \nabla u\|_2 = 0;~u(\cdotp,0) = f$
\item for a rhombus $S_\infty$: $\partial_t u \pm\| \nabla u\|_{\infty} = 0;~u(\cdotp,0) = f$.
\end{itemize}
Notice that PDE (\ref{cont}) is a special case of first order Hamilton-Jacobi equation type, which can be formulated in a more general form as follows:
\begin{equation}
\label{pde_hamil}
 \left\{
\begin{array}{l} %c
 \dfrac{\partial u(x,t)}{\partial t} + H
 \left(x,\nabla u(x,t)\right) = 0  \mbox{ on }  \R^n \times (0,+\infty)  \\
u(\cdotp,0) = f \mbox{ on } \R^n.
\end{array}
 \right.
\end{equation}
General Hamilton-Jacobi equation is studied in a viscosity sense, because there is no classical solution for such equations. For a convex Hamiltonian $H$ and some regularity on $f$, the viscosity solution is given by Hopf-Lax formulas \cite{Barron2021,Donato2023}: 
\begin{align}
\label{hlo}
u(x,t) = \inf_{y \in \R^n } \left\lbrace f(y) + t L\left( \dfrac{x - y}{t} \right)\right\rbrace,
\end{align}
where $L$ is the Lagrangian, defined as the Legendre-Fenchel transform of $H$.

%%%%%%%%%%%%%%%%%%%%%%%%%%%%%%%%%%%%%%%%%%%%%%%%%%%%%%%%%%%%
\section{Proof of Proposition~\ref{inv_geod}}\label{apA}%%%%
%%%%%%%%%%%%%%%%%%%%%%%%%%%%%%%%%%%%%%%%%%%%%%%%%%%%%%%%%%%%
For all $x \in  (M,g)$, we refer to \(G\)-invariance of vector fields \(X : x \mapsto T_x M\) if $\forall~h \in G$ and for all differentiable functions \(f\), one has:
\begin{equation}
X(x)f = X(\varphi_h(x))[\mathcal{L}_h f]. \label{eq:c3eq7}
\end{equation}

\begin{definition} A vector field \(X\) on $( M,g)$ is invariant with respect to \(G\) if $\forall~h~\in G$ and $\forall~x \in  (M,g)$, one has:
\begin{equation}
X(\varphi_h(x)) = (\varphi_h)_*X(x).\label{eq:c3eq8}
\end{equation} 
\end{definition}

%It is easy to verify that \eqref{eq:c3eq7} and \eqref{eq:c3eq8} are equivalent and that they imply the following.

\begin{definition} \label{inv_met} A \((0, 2)\)-tensor field \(g\) on \( M\) is \(G\)-invariant if $\forall~h \in G$, $\forall~x \in  M$ and $\forall~v, w \in T_x( M)$, one has:
\begin{equation}
g|_{h}(v,w) = g|_{\varphi_h(x)}((\varphi_h)_* v, (\varphi_x)_* w). \label{eq:c3eq9}
\end{equation}
\end{definition}
It follows from Definition~\ref{inv_met} that properties derived from metric tensor field \(G\) invariance and vector field \(G\) invariance are the same.
\begin{definition}
\label{dist}
Let $(M,g)$ a connected Riemannian manifold, $x, y \in  (M,g)$. The distance between $x$ and $y$ is defined as follows:
\begin{align}
\mathrm{d}_{g}(x, y) = \inf_{{\gamma ~\in ~\Gamma_{t}(x,y)}} \int_{0}^{t}\sqrt{g|_{\gamma(t)}(\dot{\gamma}(t),\dot{\gamma}(t))} \ud t,
\end{align}
with $\Gamma_{t}(x,y) =\{\gamma :[0,t]\longrightarrow  (M,g)~~ \text{of class}~~ C^1 ,\gamma (0)= x ~~\text{and}~~ \gamma(t)=y\}$.
\end{definition}

\begin{definition} The cut locus is defined as the set of points $x\in M$ (or $h\in G$) from which the distance map is not smooth (except at $x$ or $h$).
\end{definition}
\begin{proof}
Let us perform a left multiplication by $h$ in one direction and by $h^{-1}$ in the other direction. A bijection can then be established between $C^1$ curves connecting $x$ to $y$ and connecting $\varphi_h(x)$ to $\varphi_h(y)$. Thus, we have:

\begin{align*}
\mathrm{d}_g\big( \varphi_h(x),\varphi_h(y)\big) &= \inf_{{\beta ~\in ~\Gamma_{t}(\varphi_h(x), \varphi_h(y))}}\int_{0}^{t}\sqrt{g|_{\beta(t)}(\dot{\beta}(t),\dot{\beta}(t))} \ud t,\\
&= \inf_{{h\gamma ~\in ~\Gamma_{t}(\varphi_h(x),\varphi_h(y))}}\int_{0}^{t}\sqrt{g|_{h\gamma(t)}\Big(\varphi_h(\dot{\gamma}(t)), \varphi_h(\dot{\gamma}(t))\Big)} \ud t ~~~~~\text{with} ~~ \gamma \in \Gamma_{t}(\varphi_h(x),\varphi_h(y))\\
&= \inf_{{h\gamma ~\in ~\Gamma_{t}(\varphi_h(x),\varphi_h(y))}}\int_{0}^{t}\sqrt{g|_{h\gamma(t)}\Big((\varphi_h)_{*}\dot{\gamma}(t),(\varphi_h)_{*}\dot{\gamma}(t)\Big)} \ud t\\
&= \inf_{{h\gamma ~\in ~\Gamma_{t}(\varphi_h(x),\varphi_h(y))}}\int_{0}^{t}\sqrt{g|_{\gamma(t)}(\dot{\gamma}(t),\dot{\gamma}(t))} \ud t ~~~~~~~~~~~~~~\text{by \eqref{eq:c3eq9}}\\
&= \inf_{{\gamma ~\in ~\Gamma_{t}(x,y)}}\int_{0}^{t}\sqrt{g|_{\gamma(t)}\big(\dot{\gamma}(t),\dot{\gamma}(t)\big)} \ud t = \mathrm{d}_g(x,y)  \qed
\end{align*}
\end{proof}

%%%%%%%%%%%%%%%%%%%%%%%%%%%%%%%%%%%%%%%%%%%%%%%%%%%%%%%%%%%%
\section{Proof of Proposition~\ref{inv_dbn}}\label{apB}%%%%
%%%%%%%%%%%%%%%%%%%%%%%%%%%%%%%%%%%%%%%%%%%%%%%%%%%%%%%%%%%%
\begin{proof}
The case $n=2$ is trivial. Let us prove the result for $n=3$; the general case follows the same way.\\

\noindent $\bullet$ \textbf{Rotations and Reflections}\\

\noindent Let $\{\vec{u},\vec{v},\vec{w}\}$ be an orthonormal basis of $\R^3$.  
Define $R_{\theta}$ in this basis as:
\begin{equation}
R = \begin{pmatrix}
-1 & 0 & 0 \\
0 & \cos{\theta} & -\sin{\theta} \\
0 & \sin{\theta} & \cos{\theta}
\end{pmatrix},
\end{equation}
which represents an anti-rotation by angle $\theta$ around $\vec{u}$ (a composition of rotation and reflection).  
Applying $R_{\theta}$ to $X = (x,y,z)$ yields:
\begin{equation}
R_{\theta}X =
\begin{pmatrix}
-x \\
y\cos{\theta} - z\sin{\theta} \\
y\sin{\theta} + z\cos{\theta}
\end{pmatrix}.
\end{equation}
Its Euclidean norm satisfies:
\begin{equation}
\Vert R_{\theta}X \Vert^2 = \Vert X \Vert^2,
\end{equation}
and similarly $\Vert R_{\theta}X - R_{\theta}Y \Vert = \Vert X-Y \Vert$.  
Substituting these into the expression for $\mathrm{d}_{\B^3}$, we obtain:
\begin{equation}
\mathrm{d}_{\B^{3}}(R_{\theta} X, R_{\theta} Y) = \mathrm{d}_{\B^{3}}(X,Y).
\end{equation}

\noindent $\bullet$ \textbf{Permutations}\\

\noindent Let us represent the group of permutations of $\{1,2,3\}$ as follows:
\begin{equation}
\sigma =
\begin{pmatrix}
1 & 2 & 3\\
\sigma(1) & \sigma(2) & \sigma(3)
\end{pmatrix},
\end{equation}
with $\sigma(1)=2, \sigma(2)=3, \sigma(3)=1$.  
It follows that $\Vert \sigma X\Vert =\Vert X\Vert$ and $\Vert \sigma X - \sigma Y\Vert =\Vert X-Y\Vert$, hence:
\begin{equation}
\mathrm{d}_{\B^{3}}(\sigma X, \sigma Y) = \mathrm{d}_{\B^{3}}(X,Y). \qed
\end{equation}
 
\end{proof}

%%%%%%%%%%%%%%%%%%%%%%%%%%%%%%%%%%%%%%%%%%%%%%%%%%%%%%%%%%%%
\section{Proof of Proposition~\ref{map}}\label{apC}%%%%
%%%%%%%%%%%%%%%%%%%%%%%%%%%%%%%%%%%%%%%%%%%%%%%%%%%%%%%%%%%%

\begin{proof}
$\Phi$ is well-defined and continuous. Next, we show that $\Phi$ is an injection and a $\mathcal{C}^k$-diffeomorphism.  \\

\noindent$\bullet$ \textbf{Injectivity of $P$}\\

\noindent Let $x, y \in \R^n$. We assume that $\Phi(x)=\Phi(y)$, we need to show that $x=y$.
$$
\begin{aligned}
  \Phi(x) = \Phi(y)
  &\Longrightarrow \dfrac{x}{\sqrt{1+\Vert x\Vert^{2}}} = \dfrac{y}{\sqrt{1+\Vert y\Vert^{2}}}\\
  &\Longrightarrow \dfrac{\Vert x\Vert}{\sqrt{1+\Vert x\Vert^{2}}} = \dfrac{\Vert y\Vert}{\sqrt{1+\Vert y\Vert^{2}}}\\
  &\Longrightarrow \Vert x\Vert^2 = \Vert y\Vert^2 \\
  &\Longrightarrow x = y.
\end{aligned}
$$
Hence, $\Phi$ is injective.\\

\noindent$\bullet$ \textbf{$\mathcal{C}^k$-diffeomorphism property of $\Phi$:}\\

\noindent For $x=(x_i)_{i=1}^n \in \R^n$, we have
\begin{equation}
\Phi(x) = \dfrac{x}{\sqrt{1+\Vert x\Vert^2}} = \left( \dfrac{x_i}{\sqrt{1+\Vert x\Vert^2}} \right)_{i=1}^n,
\end{equation}
and we denote $\Phi_i(x) = \dfrac{x_i}{\sqrt{1+\Vert x\Vert^2}}$. Then:
$$
\begin{aligned}
    \frac{\partial}{\partial x_j}\Phi_i(x) &= \frac{\delta_{ij}}{\sqrt{1+\Vert x\Vert^2}} - \frac{x_i x_j}{(1+\Vert x\Vert^2)^{3/2}},
\end{aligned}
$$
where
\begin{equation}
\delta_{ij} = \begin{cases}
1 & \text{if } i=j,\\
0 & \text{otherwise,}
\end{cases}
\end{equation}
is the Kronecker symbol.  
Thus, the Jacobian matrix of $\Phi$ is:
\[
J_{\Phi(x)}= \dfrac{1}{\sqrt{1+\Vert x\Vert^2}}\left(I - \dfrac{x\otimes x}{1+\Vert x\Vert^2}\right),
\]
where $\otimes$ denotes the tensor product.  
In $\R^3$, we obtain:
\[
\det(J_{\Phi(x)}) = 1 - \dfrac{\Vert x\Vert^2}{\sqrt{1+\Vert x\Vert^2}} \neq 0 \quad \forall x\in \R^3,
\]
which shows that $J_{\Phi(x)}$ is invertible. Hence, $\Phi$ is a diffeomorphism onto its image, and therefore, it is an embedding of $\R^n$ into $\B^n$. 
\end{proof}
%%%%%%%%%%%%%%%%%%%%%%%%%%%%%%%%%%%%%%%%%%%%%%%%%%%%%%%%%%%%%%%%%%
\section{Additional Qualitative Results} \label{addresults} %%%%%%
%%%%%%%%%%%%%%%%%%%%%%%%%%%%%%%%%%%%%%%%%%%%%%%%%%%%%%%%%%%%%%%%%%
%----------------figure des sample mnist-------------------%
\begin{figure}[!ht]
    %\vskip 0.2in
    \centering
    \begin{subfigure}{0.48\columnwidth}
        \centering
        \includegraphics[width=\linewidth]{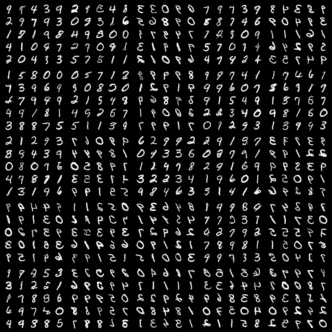}
        \caption{DEGMC (FID = 36.31)}
        \label{fig:ed2rm30}
    \end{subfigure}
    \hfill
    \begin{subfigure}{0.48\columnwidth}
        \centering
        \includegraphics[width=\linewidth]{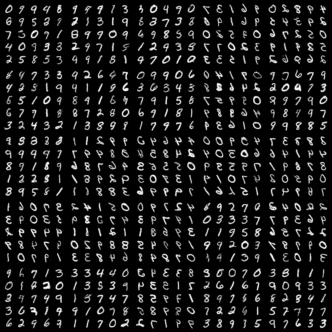}
        \caption{DDPM (FID = 41.39)}
        \label{fig:ddpm30}
    \end{subfigure}
    \caption{Generated image samples on MNIST using DEGMC and DDPM (sample at iteration 30).}
    \label{fig:samplemnist30}
\end{figure}
%------------------------------------------------------------%
%----------------figure des sample mnist-------------------%
\begin{figure}[!ht]
    %\vskip 0.2in
    \centering
    \begin{subfigure}{0.48\columnwidth}
        \centering
        \includegraphics[width=\linewidth]{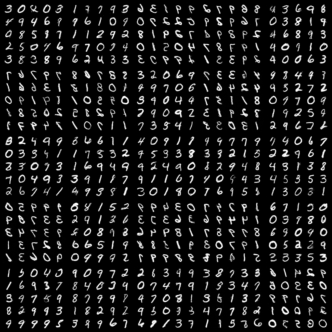}
        \caption{DEGMC (FID = 42.20)}
        \label{fig:ed2rm50}
    \end{subfigure}
    \hfill
    \begin{subfigure}{0.48\columnwidth}
        \centering
        \includegraphics[width=\linewidth]{sample-30.png}
        \caption{DDPM (FID = 38.86)}
        \label{fig:ddpm50}
    \end{subfigure}
    \caption{Generated image samples on MNIST using DEGMC and DDPM (sample at iteration 50).}
    \label{fig:samplemnist50}
\end{figure}
%------------------------------------------------------------%
%----------------figure des sample roto mnist-------------------%
\begin{figure}[!ht]
    %\vskip 0.2in
    \centering
    \begin{subfigure}{0.48\columnwidth}
        \centering
        \includegraphics[width=\linewidth]{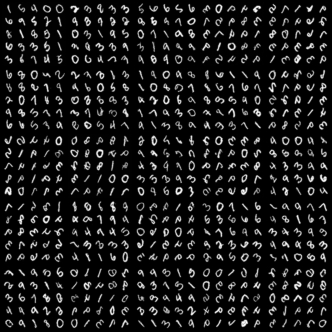}
        \caption{DEGMC (FID = 45.07)}
        \label{fig:ed2rmroto30}
    \end{subfigure}
    \hfill
    \begin{subfigure}{0.48\columnwidth}
        \centering
        \includegraphics[width=\linewidth]{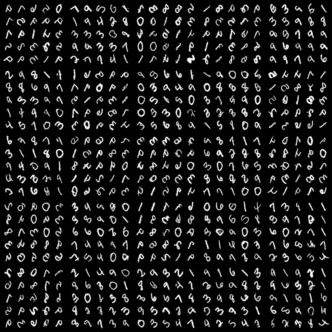}
        \caption{DDPM (FID = 48.89)}
        \label{fig:ddpmroto30}
    \end{subfigure}
    \caption{Generated image samples on RotoMNIST using DEGMC and DDPM (sample at iteration 30).}
    \label{fig:sampleroto30}
\end{figure}
%------------------------------------------------------------%
%----------------figure des sample roto mnist-------------------%
\begin{figure}[!ht]
    %\vskip 0.2in
    \centering
    \begin{subfigure}{0.48\columnwidth}
        \centering
        \includegraphics[width=\linewidth]{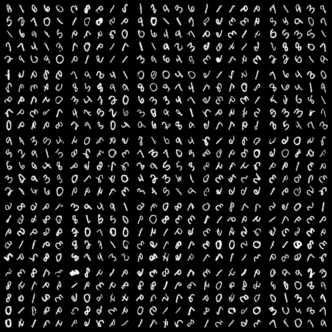}
        \caption{DEGMC (FID = 43.79)}
        \label{fig:ed2rmroto50}
    \end{subfigure}
    \hfill
    \begin{subfigure}{0.48\columnwidth}
        \centering
        \includegraphics[width=\linewidth]{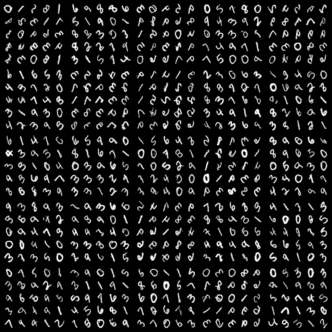}
        \caption{DDPM (FID = 47.64)}
        \label{fig:ddpmroto50}
    \end{subfigure}
    \caption{Generated image samples on RotoMNIST using DEGMC and DDPM (sample at iteration 50).}
    \label{fig:sampleroto50}
\end{figure}
%------------------------------------------------------------%

%----------------figure des sample Cifar-------------------%
\begin{figure}[!ht]
    %\vskip 0.2in
    \centering
    \begin{subfigure}{0.48\columnwidth}
        \centering
        \includegraphics[width=\linewidth]{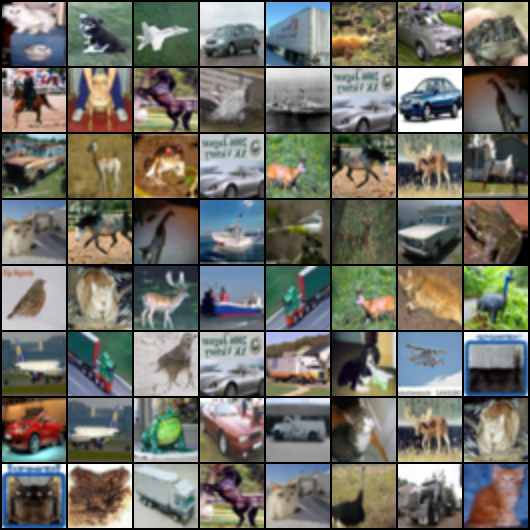}
        \caption{DEGMC (FID = 45.00)}
        \label{fig:ed2rmcifar40}
    \end{subfigure}
    \hfill
    \begin{subfigure}{0.48\columnwidth}
        \centering
        \includegraphics[width=\linewidth]{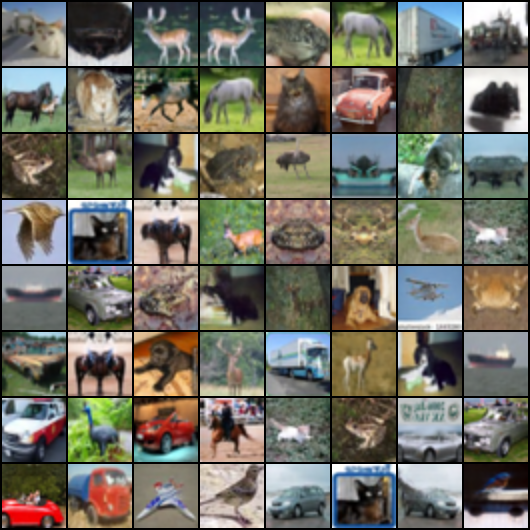}
        \caption{DDPM (FID = 45.11)}
        \label{fig:ddpmcifar40}
    \end{subfigure}
    \caption{Generated image samples on CIFAR-10 using DEGMC and DDPM (sample at iteration 40).}
    \label{fig:samplecifar40}
\end{figure}
%------------------------------------------------------------%

%----------------figure des sample cifar-------------------%
\begin{figure}[!ht]
    %\vskip 0.2in
    \centering
    \begin{subfigure}{0.48\columnwidth}
        \centering
        \includegraphics[width=\linewidth]{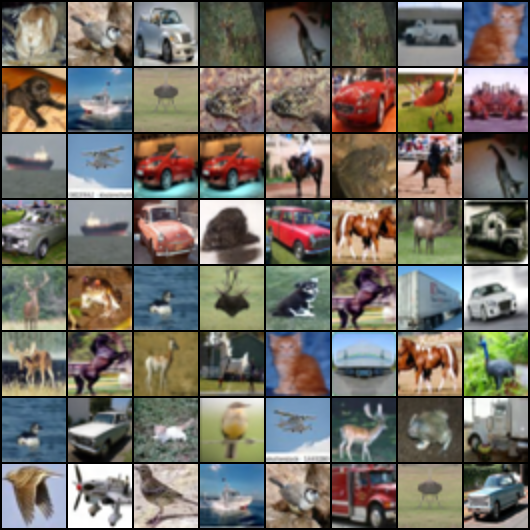}
        \caption{DEGMC (FID = 34.57)}
        \label{fig:ed2rmcifar70}
    \end{subfigure}
    \hfill
    \begin{subfigure}{0.48\columnwidth}
        \centering
        \includegraphics[width=\linewidth]{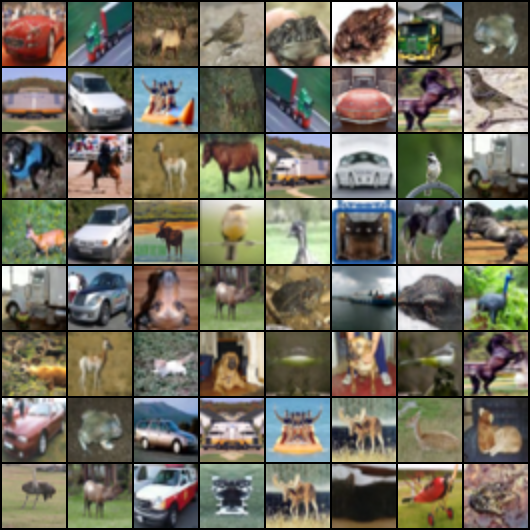}
        \caption{DDPM (FID = 34.36)}
        \label{fig:ddpmcifar70}
    \end{subfigure}
    \caption{Generated image samples on CIFAR-10 using DEGMC and DDPM (sample at iteration 70).}
    \label{fig:samplecifar70}
\end{figure}
%------------------------------------------------------------%

%\end{document}

\section{Short review on probabilistic diffusion models}\label{sec:pdm}%-----------------------------
%------------------------------------------------------------------------------

Diffusion models are generative models based on the progressive addition of noise to data, followed by learning the reverse denoising process using a neural network.

%\begin{figure}[h]
%\centering\includegraphics[width=0.7\linewidth]{fig1.png}
%\caption{Illustration of the forward noising process and the reverse denoising process.}
%\label{fig:processus}
%\end{figure}

Given observations $x \sim q(x)$, the model operates on latent variables $n_0, \dots, n_T$ of the same dimension as $x$, where $n_0$ corresponds to the observation $x$, and $n_T$ represents standard Gaussian noise (see Fig.~\ref{fig:processus}).

\paragraph{Forward process}
The forward process gradually adds noise to the variables $n_t$, for $t \in \{0, \dots, T\}$, i.e., $n_{t+1} = n_t + \text{noise}$, where the noise is random. There exists a conditional distribution that models the probability of obtaining $n_{t+1}$ given $n_t$, denoted $q(n_{t+1} \mid n_t)$, which follows the Gaussian distribution:
\begin{equation}
    q(n_t \mid n_{t-1}) = \mathcal{N}(n_t : \sqrt{\alpha_t}n_{t-1}, \Gamma_t I) \label{eq:eq10}
\end{equation}
where we define $\alpha_t$ by considering the special case of the variance-preserving process proposed by \cite{ho2020denoising}, i.e., $\alpha_t = 1 - \Gamma_t$. This parameter controls the amount of signal preserved, and $\Gamma_t \in (0,1)$ is the noise-level parameter added at each step $t$. It acts progressively in this process so that the mean $\alpha_t n_{t-1}$ increasingly deviates from the already noised data $n_{t-1}$.

The forward process is Markovian; thus, for any $t \in \{0, \dots, T\}$, $n_t$ depends only on $n_{t-1}$ and not on other variables \cite{Ghojogh2019}:
\begin{equation}
     q(n_t \mid n_{t-1}, n_{t-2}, \dots, n_0) = q(n_t \mid n_{t-1}) \label{eq:eq11}
\end{equation}
Hence, the joint distribution of this process is obtained as:
\begin{equation}
    q(n_1, n_2, \dots, n_T \mid n_0) = \prod_{t = 1}^{T} q(n_t \mid n_{t-1})
\end{equation}
Moreover, for any $t > s$, the transition distribution from $s$ to $t$ can be defined using the Gaussian reparameterization of \cite{Kingma2013}, considering the standard distribution $\varepsilon \sim \mathcal{N}(0, I)$. Thus, Eq.~\eqref{eq:eq10} can be rewritten as:
\begin{equation}
    n_t = \sqrt{\alpha_t}n_{t-1} + \sqrt{1 - \alpha_t} \, \varepsilon \label{eq:reparemetrisation}
\end{equation}
Consequently, for any $t > s$, we have:
\begin{equation}
    q(n_t \mid n_s) = \mathcal{N}(n_t : \sqrt{\alpha_{t/s}}n_s, \Gamma_{t/s} I)
\end{equation}
with $\alpha_{t/s} = \prod_{i = s+1}^{t} \alpha_i$, $\Gamma_{t/s} = 1 - \alpha_{t/s}$. With respect to the initial data $n_0$:
\begin{equation}
    q(n_t \mid n_0) = \mathcal{N}(n_t : \sqrt{\bar{\alpha}}n_0, \bar{\Gamma} I)
\end{equation}
where $\bar{\alpha} = \prod_{i = 1}^{t} \alpha_i$ and $\bar{\Gamma} = 1 - \bar{\alpha}$.

\paragraph{Reverse generative process} 
The diffusion process, or reverse generative process, progressively generates data from noise, following the true denoising process denoted $P(n_{t-1} \mid n_t)$, which defines the probability of obtaining $n_{t-1}$ from $n_t$. This distribution is Gaussian, similar to that of the forward process.

However, since $x_0$ is unknown during denoising, a neural network $\phi$, parameterized by $\theta$, is used to approximate the Gaussian reverse conditional distribution, denoted $p_{\theta}(n_{t-1} \mid n_t)$, defined as:
\begin{equation}
    p_{\theta}(n_{t-1} \mid n_t) = \mathcal{N}(n_{t-1} : \mu_{\theta}(n_t, t), \Sigma_{\theta}(n_t, t)) \label{eq:dgeneratif}
\end{equation}
where $\mu_{\theta} \in \R^d$ and $\Sigma_{\theta} \in \R^{d \times d}$ represent, respectively, the mean and covariance matrix of the distribution at iteration $t$. For simplicity, as proposed in \cite{ho2020denoising}, we fix $\Sigma_{\theta}(n_t, t) = \sigma_{t}^{2} I$, with time-dependent constants $\sigma_t^2$ that are not learned.

%\begin{figure}[h]
%\centering\includegraphics[width=0.9\linewidth]{fig 2.png}
%\caption{Conditional distributions in the forward and generative processes of the diffusion model.}
%\label{fig:distribution}
%\end{figure}

The generative process is also modeled as a first-order Markov chain, i.e.,
\begin{equation}
    P(n_{t-1} \mid n_t, n_{t+1}, \dots, n_T) = P(n_{t-1} \mid n_t)
\end{equation}
so that the joint distribution of the generative process is written as:
\begin{equation}
    p_{\theta}(n_{0:T}) = P(n_T) \prod_{t=1}^{T} p_{\theta}(n_{t-1} \mid n_t)
\end{equation}
with $P(n_T)$ generally defined as Gaussian noise:
\begin{equation}
    P(n_T) = \mathcal{N}(0, I)
\end{equation}

The true distribution is defined similarly to Eq.~\eqref{eq:eq10}:
\begin{equation}
    P(n_{t-1} \mid n_t, n_0) = \mathcal{N}(n_{t-1} : \tilde{\mu}(n_t, n_0), \tilde{\sigma}^2 I) \label{eq:dreel}
\end{equation}
Using the same reparameterization technique as in the forward case (see Eq.~\eqref{eq:reparemetrisation}), one can sample from a standard distribution $\varepsilon \sim \mathcal{N}(0, I)$. It can then be shown that the mean of Eq.~\eqref{eq:dreel} can be expressed as:
\begin{equation}
    \tilde{\mu}(n_t, n_0) = \frac{1}{\sqrt{\alpha_t}} \left(n_t - \frac{\Gamma_t}{\sqrt{1 - \bar{\alpha}_t}} \varepsilon \right)
\end{equation}
Thus, an expression for $\mu_{\theta}(n_t, t)$ in Eq.~\eqref{eq:dgeneratif} can also be given. Since the learned denoising process is defined based on the true denoising process, we obtain:
\begin{equation}
    \mu_{\theta}(n_t, t) = \frac{1}{\sqrt{\alpha_t}} \left(n_t - \frac{\Gamma_t}{\sqrt{1 - \bar{\alpha}_t}} \varepsilon_{\theta}(n_t, t) \right)
\end{equation}
where $\varepsilon_{\theta}(n_t, t) = \phi(n_t, t)$ is the output of the neural network $\phi$ at iteration $t$.

\paragraph{Variational lower bound of the likelihood} 
As previously mentioned, diffusion models introduce a sequence of latent variables. The likelihood of the data in these models is expressed as follows:
\begin{align*}
 p_{\theta}\left(\boldsymbol{n}_{0}\right)=\int p_{\theta}\left(\boldsymbol{n}_{0}, \boldsymbol{n}_{1: T}\right) \, d \boldsymbol{n}_{1: T}. 
\end{align*}

Since the direct maximization of this likelihood is intractable, diffusion models optimize a variational lower bound (ELBO) of the data likelihood:
\begin{align}
\mathcal{L} := \; &\E_{q\left(n_{1: T} \mid     n_{0}\right)}\!\left[\log p_{\theta}\left(n_{0} \mid n_{1: T}\right)\right] - \operatorname{KL}\!\left(q\left(n_{1: T} \mid n_{0}\right) \,\|\, p_{\theta}\left(n_{1: T}\right)\right) 
\;\; \leq \;\; \log p_{\theta}\left(n_{0}\right),
\end{align}
where $\E[\cdot]$ denotes the expectation and $\operatorname{KL}(\cdot \| \cdot)$ the Kullback–Leibler divergence.  

By further expansion, we obtain an equivalent expression (see \cite{ho2020denoising,SohlDickstein2015}):
\begin{align*}
\mathcal{L}(\theta) = -\operatorname{KL}\!\left(q\left(n_{T} \mid n_{0}\right) \,\|\, p\left(n_{T}\right)\right) 
 - \sum_{t=2}^{T} \operatorname{KL}\!\left(q\left(n_{t-1} \mid n_{t}, n_{0}\right) \,\|\, p_{\theta}\left(n_{t-1} \mid n_{t}\right)\right) \\
+ \E_{q\left(n_{1:T} \mid n_{0}\right)}\!\left[\log p_{\theta}\left(n_{0} \mid n_{1}\right)\right].
\end{align*}

The variational lower bound must be maximized with respect to $\theta$. However, the first KL divergence is independent of $\theta$ and can therefore be ignored during optimization. Thus, maximizing $\mathcal{L}$ is equivalent to solving:
\begin{align}
\underset{\theta}{\operatorname{minimize}} \;  \sum_{t=2}^{T} \operatorname{KL}\!\left(q\left(n_{t-1} \mid n_{t}, n_{0}\right) \,\|\, p_{\theta}\left(n_{t-1} \mid n_{t}\right)\right) 
 - \E_{q\left(n_{1: T} \mid n_{0}\right)}\!\left[\log p_{\theta}\left(n_{0} \mid n_{1}\right)\right].
\end{align}

This equation drives the reverse distribution $p_{\theta}\!\left(\boldsymbol{x}_{t-1} \mid \boldsymbol{x}_{t}\right)$ to approximate the true denoising distribution $q\!\left(\boldsymbol{x}_{t-1} \mid \boldsymbol{x}_{t}, \boldsymbol{x}_{0}\right)$ by minimizing their KL divergence. It can therefore be used as a loss function for a neural network parametrized by $\theta$, emphasizing the alignment between these two distributions.  

In other words, optimizing the ELBO forces the model to learn a denoising process capable of reversing the progressive diffusion of noise. Hence, training consists in aligning the learned reverse process $p_{\theta}$ with the true denoising process $q$, while maximizing the likelihood of the real data.

%--------------------------------------------------------
\bibliographystyle{IEEEtran}
\bibliography{Le_Thier_bib,biblio} % ../../../BIBLIOGRAPHY/bib/

\begin{thebibliography}{10}
\providecommand{\url}[1]{#1}
\csname url@rmstyle\endcsname
\providecommand{\newblock}{\relax}
\providecommand{\bibinfo}[2]{#2}
\providecommand\BIBentrySTDinterwordspacing{\spaceskip=0pt\relax}
\providecommand\BIBentryALTinterwordstretchfactor{4}
\providecommand\BIBentryALTinterwordspacing{\spaceskip=\fontdimen2\font plus
\BIBentryALTinterwordstretchfactor\fontdimen3\font minus
  \fontdimen4\font\relax}
\providecommand\BIBforeignlanguage[2]{{%
\expandafter\ifx\csname l@#1\endcsname\relax
\typeout{** WARNING: IEEEtran.bst: No hyphenation pattern has been}%
\typeout{** loaded for the language `#1'. Using the pattern for}%
\typeout{** the default language instead.}%
\else
\language=\csname l@#1\endcsname
\fi
#2}}

\bibitem{Goodfellow2014}
I.~Goodfellow, J.~Pouget-Abadie, M.~Mirza, B.~Xu, D.~Warde-Farley, S.~Ozair,
  A.~Courville, and Y.~Bengio, ``Generative adversarial nets,'' \emph{Advances
  in neural information processing systems}, vol.~27, 2014.

\bibitem{Kingma2013}
D.~P. Kingma and M.~Welling, ``Auto-encoding variational bayes,'' \emph{arXiv
  preprint arXiv:1312.6114}, 2013.

\bibitem{Kingma2014}
D.~P. Kingma, S.~Mohamed, D.~Jimenez~Rezende, and M.~Welling, ``Semi-supervised
  learning with deep generative models,'' \emph{Advances in neural information
  processing systems}, vol.~27, 2014.

\bibitem{dhariwal2021diffusion}
P.~Dhariwal and A.~Nichol, ``Diffusion models beat gans on image synthesis,''
  \emph{Advances in neural information processing systems}, vol.~34, pp.
  8780--8794, 2021.

\bibitem{ho2020denoising}
J.~Ho, A.~Jain, and P.~Abbeel, ``Denoising diffusion probabilistic models,''
  \emph{Advances in neural information processing systems}, vol.~33, pp.
  6840--6851, 2020.

\bibitem{chen2020wavegrad}
N.~Chen, Y.~Zhang, H.~Zen, R.~J. Weiss, M.~Norouzi, and W.~Chan, ``Wavegrad:
  Estimating gradients for waveform generation,'' \emph{arXiv preprint
  arXiv:2009.00713}, 2020.

\bibitem{popov2021grad}
V.~Popov, I.~Vovk, V.~Gogoryan, T.~Sadekova, and M.~Kudinov, ``Grad-tts: A
  diffusion probabilistic model for text-to-speech,'' in \emph{International
  conference on machine learning}.\hskip 1em plus 0.5em minus 0.4em\relax PMLR,
  2021, pp. 8599--8608.

\bibitem{simonovsky2018graphvae}
M.~Simonovsky and N.~Komodakis, ``Graphvae: Towards generation of small graphs
  using variational autoencoders,'' in \emph{International conference on
  artificial neural networks}.\hskip 1em plus 0.5em minus 0.4em\relax Springer,
  2018, pp. 412--422.

\bibitem{gebauer2019symmetry}
N.~Gebauer, M.~Gastegger, and K.~Sch{\"u}tt, ``Symmetry-adapted generation of
  3d point sets for the targeted discovery of molecules,'' \emph{Advances in
  neural information processing systems}, vol.~32, 2019.

\bibitem{simm2020symmetry}
G.~N. Simm, R.~Pinsler, G.~Cs{\'a}nyi, and J.~M. Hern{\'a}ndez-Lobato,
  ``Symmetry-aware actor-critic for 3d molecular design,'' \emph{arXiv preprint
  arXiv:2011.12747}, 2020.

\bibitem{hoogeboom2022equivariant}
E.~Hoogeboom, V.~G. Satorras, C.~Vignac, and M.~Welling, ``Equivariant
  diffusion for molecule generation in 3d,'' in \emph{International conference
  on machine learning}.\hskip 1em plus 0.5em minus 0.4em\relax PMLR, 2022, pp.
  8867--8887.

\bibitem{SohlDickstein2015}
J.~Sohl-Dickstein, E.~A. Weiss, N.~Maheswaranathan, and S.~Ganguli, ``Deep
  unsupervised learning using nonequilibrium thermodynamics,'' in
  \emph{ICML'15: Proceedings of the 32nd International Conference on
  International Conference on Machine Learning}, vol.~37, Lille, France, July
  2015, pp. 2256 -- 2265.

\bibitem{Song2019}
Y.~Song and S.~Ermon, ``Generative modeling by estimating gradients of the data
  distribution,'' in \emph{Proceedings of the 33rd International Conference on
  Neural Information Processing Systems}, Vancouver, Canada, Dec. 2019, pp.
  11\,918 -- 1193.

\bibitem{Song2021}
Y.~Song, J.~Sohl-Dickstein, D.~P. Kingma, A.~Kumar, S.~Ermon, and B.~Poole,
  ``Score-{B}ased {G}enerative {M}odeling through {S}tochastic {D}ifferential
  {E}quations,'' in \emph{International Conference on Learning
  Representations}, Vienna, Austria, May 2021.

\bibitem{Croitoru2023}
F.-A. Croitoru, V.~Hondru, R.~T. Ionescu, and M.~Shah, ``Diffusion {M}odels in
  {V}ision: A {S}urvey,'' \emph{IEEE Transactions on Pattern Analysis and
  Machine Intelligence}, vol.~45, no.~9, pp. 10\,850--10\,869, Sept. 2023.

\bibitem{Huang2021}
C.-W. Huang, J.~H. Lim, and A.~Courville, ``A {V}ariational {P}erspective on
  {D}iffusion-{B}ased {G}enerative {M}odels and {S}core {M}atching,'' in
  \emph{Conference on Neural Information Processing Systems (NeurIPS 2021)},
  San Diego, CA, Dec. 2021.

\bibitem{Rezende2014}
D.~J. Rezende, S.~Mohamed, and D.~Wierstra, ``Stochastic {B}ackpropagation and
  {A}pproximate {I}nferencein {D}eep {G}enerative {M}odels,'' in
  \emph{Proceedings of the International Conference on MachineLearning},
  Beijing, China, 2014.

\bibitem{Goodfellow2017}
I.~Goodfellow, ``{G}enerative {A}dversarial {N}etworks,'' in \emph{NIPS}, 2017,
  p.~57.

\bibitem{Satorras2021}
V.~G. Satorras, E.~Hoogeboom, and M.~Welling, ``E(n) {E}quivariant {G}raph
  {N}eural {N}etworks,'' in \emph{International Conference on
  MachineLearning}.\hskip 1em plus 0.5em minus 0.4em\relax PMLR, 2021, pp.
  9323--9332.

\bibitem{Cornet2024}
F.~Cornet, G.~Bartosh, M.~N. Schmidt, and C.~A. Naesseth, ``Equivariant
  {N}eural {D}iffusion for {M}olecule {G}eneration,'' in \emph{Conference on
  Neural Information Processing Systems}, 2024.

\bibitem{Brehmer2023}
J.~Brehmer, J.~Bose, P.~de~Haan, and T.~Cohen, ``Edgi: Equivariant {D}iffusion
  for {P}lanning with {E}mbodied {A}gents,'' in \emph{Conference on Neural
  Information Processing Systems}, 2023.

\bibitem{Igashov2024}
I.~Igashov, H.~Stärk, C.~Vignac, A.~Schneuing, V.~G. Satorras, P.~Frossard,
  M.~Welling, M.~Bronstein, and B.~Correia, ``Equivariant 3{D}-conditional
  diffusion model for molecular linker design,'' \emph{Nature Machine
  Intelligence}, vol.~6, no.~4, pp. 417--427, Apr. 2024.

\bibitem{Cohen2016}
T.~Cohen and M.~Welling, ``{G}roup {E}quivariant {C}onvolutional {N}etworks,''
  in \emph{International conference on machine learning}.\hskip 1em plus 0.5em
  minus 0.4em\relax PMLR, 2016, pp. 2990--2999.

\bibitem{Bekkers2018}
E.~J. Bekkers, M.~W. Lafarge, M.~Veta, K.~A. Eppenhof, J.~P. Pluim, and
  R.~Duits, ``{R}oto-translation covariant convolutional networks for medical
  image analysis,'' in \emph{Medical Image Computing and Computer Assisted
  Intervention -- MICCAI 2018: 21st International Conference, Proceedings, Part
  I}, Granada, Spain, Sept. 2018, pp. 440--448.

\bibitem{Cohen2019}
T.~S. Cohen, M.~Geiger, and M.~Weiler, ``A general theory of equivariant cnns
  on homogeneous spaces,'' \emph{Advances in neural information processing
  systems}, vol.~32, 2019.

\bibitem{Winkels2018}
M.~Winkels and T.~S. Cohen, ``3d {G}-{CNN}s for pulmonary nodule detection,''
  in \emph{Medical Imaging with Deep Learning}, 2018.

\bibitem{Cohen2018}
T.~S. Cohen, M.~Geiger, J.~Köhler, and M.~Welling, ``Spherical {CNN}s,'' in
  \emph{International Conference on Learning Representations}, 2018.

\bibitem{Bekkers2019}
E.~Bekkers, ``B-{S}pline {CNN}s on {L}ie {G}roups,'' in \emph{International
  Conference on Learning Representations}, 2019.

\bibitem{Smets13July2022}
B.~M.~N. Smets, J.~Portegies, E.~J. Bekkers, and R.~Duits, ``{PDE}-{B}ased
  {G}roup {E}quivariant {C}onvolutional {N}eural {N}etworks,'' \emph{Journal of
  Mathematical Imaging and Vision}, vol.~65, no.~1, pp. 209--239, 2022.

\bibitem{Bellaard2023}
G.~Bellaard, D.~L. Bon, G.~Pai, B.~M. Smets, and R.~Duits, ``{A}nalysis of
  (sub-){R}iemannian {PDE-G-CNNs},'' \emph{Journal of Mathematical Imaging and
  Vision}, pp. 1--25, 2023.

\bibitem{Diop2024b}
E.~H.~S. Diop, T.~Fall, A.~Mbengue, and M.~Daoudi, \emph{{GM}-{GAN}: Geometric
  {G}enerative {M}odels {B}ased on {M}orphological {E}quivariant {PDE}s and
  {GAN}s}.\hskip 1em plus 0.5em minus 0.4em\relax Springer Nature Switzerland,
  Dec. 2024, pp. 310--325.

\bibitem{Huang2022}
C.-W. Huang, M.~Aghajohari, A.~J.~B. Panangaden, and A.~Courville, ``Riemannian
  {D}iffusion {M}odels,'' in \emph{Conference on Neural Information Processing
  Systems Conference on Neural Information Processing Systems}, 2022.

\bibitem{Liu2025}
Z.~Liu, W.~Zhang, C.~Schütte, and T.~Li, ``Riemannian {D}enoising {D}iffusion
  {P}robabilistic {M}odels,'' May 2025.

\bibitem{Bortoli2022}
V.~D. Bortoli, Émile Mathieu, M.~Hutchinson, J.~Thornton, Y.~W. Teh, and
  A.~Doucet, ``Riemannian score-based generative modelling,'' \emph{Advances in
  neural information processing systems}, vol.~35, pp. 2406--2422, 2022.

\bibitem{Lou2023}
A.~Lou, M.~Xu, A.~Farris, and S.~Ermon, ``Scaling {R}iemannian {D}iffusion
  {M}odels,'' in \emph{Conference on Neural Information Processing Systems},
  2023.

\bibitem{Fathi2008}
A.~Fathi, \emph{The {W}eak {KAM} {T}heorem in {L}agrangian {D}ynamics}.\hskip
  1em plus 0.5em minus 0.4em\relax Cambridge University Press, 2008.

\bibitem{Diop2021}
E.~H.~S. Diop, A.~Mbengue, B.~Manga, and D.~Seck, ``Extension of {M}athematical
  {M}orphology in {R}iemannian {S}paces,'' in \emph{Lecture Notes in Computer
  Science}.\hskip 1em plus 0.5em minus 0.4em\relax Springer International
  Publishing, 2021, pp. 100--111.

\bibitem{Ghojogh2019}
B.~Ghojogh, F.~Karray, and M.~Crowley, ``Hidden {M}arkov {M}odel: Tutorial,''
  \emph{engrXiv}, July 2019.

\bibitem{Meyer2000}
F.~Meyer and P.~Maragos, ``Nonlinear scale-space representation with
  morphological levelings,'' \emph{Journal of Visual Communication and Image
  Representation}, vol.~11, pp. 245--265, 2000.

\bibitem{Schmidt2016}
M.~Schmidt and J.~Weickert, ``Morphological counterparts of linear
  shift-invariant scale-spaces,'' \emph{Journal of Mathematical Imaging and
  Vision}, vol.~56, no.~2, pp. 352--366, apr 2016.

\bibitem{Barron2021}
E.~N. Barron, ``A survey of {H}opf-{L}ax {F}ormulas and {Q}uasiconvexity in
  {PDE}s,'' \emph{Trudy Instituta Matematiki i Mekhaniki UrO RAN}, vol.~27,
  no.~3, pp. 237--245, Sept. 2021.

\bibitem{Donato2023}
D.~D. Donato, ``The intrinsic hopf-lax semigroup vs. the intrinsic slope,''
  \emph{Journal of Mathematical Analysis and Applications}, vol. 523, no.~2, p.
  127051, jul 2023.

\end{thebibliography}

\end{document}